\crefname{section}{Sec.}{Secs.}
\crefname{appendix}{App.}{Apps.}
\crefname{equation}{Eq.}{Eqs.}
\crefname{figure}{Fig.}{Figs.}
\crefname{table}{Tab.}{Tabs.}
\newtheorem{theorem}{Theorem}
\crefname{theorem}{Thm.}{Thms.}
\newtheorem{lemma}{Lemma}
\crefname{lemma}{Lem.}{Lems.}
\newtheorem{assumption}{Assumption}
\crefname{assumption}{Assump.}{Assumps.}
\crefname{proposition}{Prop.}{Props.}
\newtheorem{example}{Example}
\crefname{example}{Ex.}{Exs.}
\crefname{definition}{Def.}{Defs.}
\newtheorem*{remark}{Remark}
\theoremstyle{remark}
\newtheorem*{sketchofproof}{Sketch of Proof}
\crefname{algorithm}{Alg.}{Algs.}
\newcommand{\ro}[1]{\left(#1\right)} 
\newcommand{\sq}[1]{\left[#1\right]} 
\newcommand{\sqd}[1]{\left[\!\left[#1\right]\!\right]} 
\newcommand{\cu}[1]{\left\{#1\right\}} 
\newcommand{\norm}[1]{\left\Vert#1\right\Vert}
\newcommand{\inn}[1]{\left\langle#1\right\rangle}
\newcommand{\ee}{{\rm e}}
\newcommand{\pif}{{+\infty}}
\newcommand{\E}{\mathbb{E}}
\renewcommand{\P}{\mathbb{P}}
\newcommand{\var}{\operatorname{Var}}
\newcommand{\iid}{\stackrel{{\rm i.i.d.}}{\sim}}
\newcommand{\R}{\mathbb{R}}
\newcommand{\Z}{\mathbb{Z}}
\newcommand{\n}[1]{\operatorname{{\cal N}}\left(#1\right)}
\renewcommand{\d}{\mathrm{d}}
\newcommand{\de}[2]{\frac{{\rm d}#1}{{\rm d}#2}} 
\newcommand{\kl}{\operatorname{KL}}
\newcommand{\tv}{\operatorname{TV}}
\newcommand{\blue}[1]{{\color{blue}#1}}
\newcommand{\orange}[1]{{\color{orange}#1}}
\newcommand{\Ot}{\widetilde{O}}
\newcommand{\cA}{{\cal A}}
\newcommand{\Q}{\mathbb{Q}}
\newcommand{\W}{{\rm W}}
\newcommand{\ald}{{\rm ALD}}
\newcommand{\almc}{{\rm ALMC}}
\newcommand{\pih}{\widehat{\pi}}
\newcommand{\pit}{\widetilde{\pi}}
\newcommand{\prob}{\operatorname{Pr}}
\renewcommand{\l}{\ell}
\newcommand{\poly}{\operatorname{poly}}
\title{Provable Benefit of Annealed Langevin Monte Carlo for Non-log-concave Sampling}
\author{%
  Wei Guo, Molei Tao, Yongxin Chen\\
  Georgia Institute of Technology\\
  \texttt{\{wei.guo,mtao,yongchen\}@gatech.edu}
}
\begin{document}
\maketitle

\begin{abstract}
  We consider the outstanding problem of sampling from an unnormalized density that may be non-log-concave and multimodal. To enhance the performance of simple Markov chain Monte Carlo (MCMC) methods, techniques of annealing type have been widely used. However, quantitative theoretical guarantees of these techniques are under-explored. This study takes a first step toward providing a non-asymptotic analysis of annealed MCMC. Specifically, we establish, for the first time, an oracle complexity of $\widetilde{O}\left(\frac{d\beta^2{\cal A}^2}{\varepsilon^6}\right)$ for the simple annealed Langevin Monte Carlo algorithm to achieve $\varepsilon^2$ accuracy in Kullback-Leibler divergence to the target distribution $\pi\propto{\rm e}^{-V}$ on $\mathbb{R}^d$ with $\beta$-smooth potential $V$. Here, ${\cal A}$ represents the action of a curve of probability measures interpolating the target distribution $\pi$ and a readily sampleable distribution.
\end{abstract}

\section{Introduction}
\label{sec:intro}
We study the task of efficient sampling from a probability distribution $\pi\propto\ee^{-V}$ on $\R^d$. This fundamental problem is pivotal across various fields including computational statistics \citep{liu2004monte,brooks2011handbook}, Bayesian inference \citep{gelman2013bayesian}, statistical physics \citep{newman1999monte}, and finance \citep{dagpunar2007simulation}, and has been extensively studied in the literature \citep{chewi2022log}. The most common approach to this problem is Markov Chain Monte Carlo (MCMC), among which Langevin Monte Carlo (LMC) \citep{durmus2019analysis,vempala2019rapid,chewi2022analysis,mousavi-hosseini2023towards} is a particularly popular choice. LMC can be understood as a time-discretization of a diffusion process, known as Langevin diffusion (LD), whose stationary distribution is the target distribution $\pi$, and has been attractive partly due to its robust performance despite conceptual simplicity.

Although LMC and its variants converge rapidly when the target distribution $\pi$ is strongly log-concave or satisfies isoperimetric inequalities such as the log-Sobolev inequality (LSI) \citep{durmus2019analysis,vempala2019rapid,chewi2022analysis}, its effectiveness diminishes when dealing with target distributions that are not strongly log-concave or are multimodal, such as mixtures of Gaussians. In such scenarios, the sampler often becomes confined to a single mode, severely limiting its ability to explore the entire distribution effectively. This results in significant challenges in transitioning between modes, which can dramatically increase the mixing time, making it exponential in problem parameters such as dimension, distance between modes, etc. \citep{dong2022spectral,ma2019sampling}. Such limitations highlight the need for enhanced MCMC methodologies that can efficiently navigate the complex landscapes of multimodal distributions, thereby improving convergence rates and overall sampling efficiency.

To address the challenges posed by multimodality,  techniques around the notion of annealing have been widely employed \citep{gelfand1990sampling,neal2001annealed}. The general philosophy involves constructing a sequence of intermediate distributions $\pi_0,\pi_1,...,\pi_M$ that bridge the gap between an easily samplable distribution $\pi_0$ (e.g., Gaussian or Dirac-like), and the target distribution $\pi_M=\pi$. The process starts with sampling from $\pi_0$ and progressively samples from each subsequent distribution until $\pi_M$ is reached. When $\pi_i$ and $\pi_{i+1}$ are close enough, approximate samples from $\pi_i$ can serve as a warm start for sampling from $\pi_{i+1}$, thereby facilitating this transition. Employing LMC within this framework gives rise to what is known as the annealed LMC algorithm, which is the focus of our study. Despite its empirical success \citep{song2019generative,song2020improved,zilberstein2022annealed,zilberstein2024solving}, a thorough theoretical understanding of annealed LMC, particularly its non-asymptotic complexity bounds, remains elusive.

In this work, we take a first step toward developing a non-asymptotic analysis of annealed MCMC. Utilizing the Girsanov theorem to quantify the differences between the sampling dynamics and a reference process, we derive an upper bound on the error of annealed MCMC, which consists of two key terms. The first term is the ratio of an action functional in Wasserstein geometry, induced by optimal transport, to the duration of the process. This term decreases when the annealing schedule is sufficiently slow. The second term captures the discretization error inherent in practical implementations. Our approach challenges the traditional view that annealed MCMC is simply a series of warm starts. Analyses based on this perspective typically require assumptions of log-concavity or isoperimetric inequalities. In contrast, our theoretical framework for annealed MCMC dispenses with these assumptions, marking a significant shift in the understanding and analysis of annealed MCMC.

\paragraph{Contributions.} Our key technical contributions are summarized as follows.

\begin{itemize}[wide=0pt]
    \item We propose a novel strategy to analyze the non-asymptotic complexity bounds of annealed MCMC algorithms, bypassing the need for assumptions such as log-concavity or isoperimetry.
    \item In \cref{sec:ald}, we investigate the annealed LD, which involves running LD with a dynamically changing target distribution. We derive a notable bound on the time required to simulate the SDE for achieving $\varepsilon^2$-accuracy in KL divergence.
    \item Building on the insights from the analysis of the continuous dynamic and incorporating discretization errors, we establish a non-asymptotic oracle complexity bound for annealed LMC in \cref{sec:almc}, which is applicable to a wide range of annealing schemes.
\end{itemize}

\paragraph{Comparison of quantitative results.} The quantitative results are summarized and compared to other sampling algorithms in \cref{tab:comparison}. 
For algorithms requiring isoperimetric assumptions, we include Langevin Monte Carlo (LMC, \cite{vempala2019rapid}) and Proximal Sampler (PS, \cite{fan2023improved}), which converges rapidly but do not have theoretical guarantees without isoperimetric assumptions. For isoperimetry-free samplers, we include Simulated Tempering Langevin Monte Carlo (STLMC, \cite{ge2018simulated}), a tempering-based sampler converging rapidly for a specific family of non-log-concave distributions, and three algorithms inspired by score-based generative models: Reverse Diffusion Monte Carlo (RDMC, \cite{huang2024reverse}), Recursive Score Diffusion-based Monte Carlo (RS-DMC, \cite{huang2024faster}), and Zeroth Order Diffusion-Monte Carlo (ZOD-MC, \cite{he2024zeroth}), which involve simulating the time-reversal of the Ornstein-Uhlenbeck (OU) process and require estimating the scores of the intermediate distributions.
Notably, our approach operates under the least stringent assumptions and exhibits the most favorable $\varepsilon$-dependence among all isoperimetry-free sampling methods.

\begin{table}[t]
    \centering
    \caption{Comparison of oracle complexities in terms of $d$, $\varepsilon$, and the LSI constant for sampling from $\pi\propto\ee^{-V}$. ``$\poly(\cdot)$'' indicates a polynomial dependence on the specified parameters.}
    \label{tab:comparison}
    \begin{tabular}{ccccc}
\hline
Algorithm                                                              & \begin{tabular}[c]{@{}c@{}}Isoperimetric\\ Assumptions\end{tabular} & \begin{tabular}[c]{@{}c@{}}Other\\ Assumptions\end{tabular}                                       & Criterion             & Complexity                                       \\ \hline
\begin{tabular}[c]{@{}c@{}}LMC\end{tabular}  & $C$-LSI                                                             & Potential smooth                                                                                  & $\varepsilon^2$, $\kl(\cdot\|\pi)$ & $\Ot(C^2d\varepsilon^{-2})$                      \\ \hline
\begin{tabular}[c]{@{}c@{}}PS \end{tabular}    & $C$-LSI                                                             & Potential smooth                                                                                  & $\varepsilon$, $\rm TV$     & $\Ot(Cd^{1/2}\log\varepsilon^{-1})$              \\ \hline
\begin{tabular}[c]{@{}c@{}}STLMC\end{tabular} & /                                                                   & \begin{tabular}[c]{@{}c@{}}Translated mixture of\\ a well-conditioned\\ distribution\end{tabular} & $\varepsilon$, $\rm TV$     & $O(\poly(d,\varepsilon^{-1}))$                \\ \hline
\begin{tabular}[c]{@{}c@{}}RDMC\end{tabular} & /                                                                   & \begin{tabular}[c]{@{}c@{}}Potential smooth,\\ nearly convex at $\infty$\end{tabular}             & $\varepsilon$, $\rm TV$     & $O(\poly(d)\ee^{\poly(\varepsilon^{-1})})$ \\ \hline
\begin{tabular}[c]{@{}c@{}}RS-DMC\end{tabular}  & /                                                            & Potential smooth                                                                                  & $\varepsilon^2$, $\kl(\pi\|\cdot)$ & $\exp(O(\log^3d\varepsilon^{-2}))$                      \\ \hline
\begin{tabular}[c]{@{}c@{}}ZOD-MC\end{tabular}   & /                                                                   & \begin{tabular}[c]{@{}c@{}}Potential growing\\ at most quadratically\end{tabular}                 & $\varepsilon$, ${\rm TV}+\W_2$  & $\exp(\Ot(d)O(\log\varepsilon^{-1}))$      \\ \hline
\textbf{\begin{tabular}[c]{@{}c@{}}ALMC\\ (ours)\end{tabular}}         & /                                                                   & Potential smooth                                                                                  & $\varepsilon^2$, $\kl(\pi\|\cdot)$ & $\Ot(d\cA(d)^2\varepsilon^{-6})$             \\ \hline
\end{tabular}
\end{table}

\paragraph{Related works.} We provide a brief overview of the literature, mainly focusing on the algorithms for non-log-concave sampling and their theoretical analysis.

\begin{enumerate}[wide=0pt]
    \item \underline{Samplers based on tempering.} The fundamental concept of tempering involves sampling the system at various temperatures simultaneously: at higher temparatures, the distribution flattens, allowing particles to easily transition between modes, while at lower temperatures, particles can more effectively explore local structures. In simulated tempering \citep{marinari1992simulated,woodard2009conditions}, the system's temperature is randomly switched, while in parallel tempering (also known as replica exchange) \citep{swendsen1986replica,lee2023improved}, the temperatures of two particles are swapped according to a specific rule. However, quantitative theoretical results for tempering are limited, and the existing results (e.g., \cite{ge2018beyond,ge2018simulated,dong2022spectral}) apply primarily to certain special classes of non-log-concave distributions.
    \item \underline{Samplers based on general diffusions.} Inspired by score-based diffusion models \citep{ho2020denoising,song2021denoising,song2021scorebased}, recent advances have introduced sampling methods that reverse the OU process, as detailed in \cite{huang2024reverse,huang2024faster,he2024zeroth}. These samplers exhibit reduced sensitivity to isoperimetric conditions, but rely on estimating score functions (gradients of log-density) via importance sampling, which poses significant challenges in high-dimensional settings. Concurrently, studies such as \cite{zhang2022path,vargas2023denoising,vargas2024transport,richter2024improved} have employed neural networks to approximate unknown drift terms, enabling an SDE to transport a readily sampleable distribution to the target distribution. This approach has shown excellent performance in handling complex distributions, albeit at the expense of significant computational resources required for neural network training. In contrast, annealed LMC runs on a known interpolation of probability distributions, thus simplifying sampling by obviating the need for intensive score estimation or neural network training.
    \item \underline{Non-asymptotic analysis for non-log-concave sampling.} Drawing upon the stationary-point analysis in non-convex optimization, the seminal work \cite{balasubramanian2022towards}  characterizes the convergence of non-log-concave sampling via Fisher divergence. Subsequently, \cite{cheng2023fast} applies this methodology to examine the local mixing of LMC. However, Fisher divergence is a relatively weak criterion compared to more commonly employed metrics such as total-variational distance or Wasserstein distances. In contrast, our study provides a convergence guarantee in terms of KL divergence, which implies convergence in total-variation distance and offers a stronger result.
\end{enumerate}

\paragraph{Notations and definitions.} For $a,b\in\R$, we define $\sqd{a,b}:=[a,b]\cap\Z$, $a\wedge b:=\min(a,b)$, and $a\vee b:=\max(a,b)$. For $a,b>0$, the notations $a\lesssim b$, $a=O(b)$, and $b=\Omega(a)$ indicate that $a\le Cb$ for some universal constant $C>0$, and the notations $a\asymp b$ and $a=\Theta(b)$ stand for both $a=O(b)$ and $b=O(a)$. $\Ot(\cdot)$ hides logarithmic dependence in $O(\cdot)$. A function $U\in C^2(\R^d)$ is $\alpha(>0)$-strongly-convex if $\nabla^2U\succeq\alpha I$, and is $\beta(>0)$-smooth\footnote{Smoothness has several different definitions such as being $C^1$ or $C^\infty$, and the one used here is also a common one appearing in many standard textbooks in optimization and sampling, such as \cite{chewi2022log}.} if $-\beta I\preceq\nabla^2U\preceq\beta I$. The total-variation (TV) distance is defined as $\tv(\mu,\nu)=\sup_{A\subset\R^d}|\mu(A)-\nu(A)|$, and the Kullback-Leibler (KL) divergence is defined as $\kl(\mu\|\nu)=\E_\mu\log\de{\mu}{\nu}$. $\|\cdot\|$ represents the $\ell^2$ norm on $\R^d$. For $f:\R^d\to\R^{d'}$ and a probability measure $\mu$ on $\R^d$, $\|f\|_{L^2(\mu)}:=\ro{\int \|f\|^2\d\mu}^{\frac12}$, and the second-order moment of $\mu$ is defined as $\E_\mu\|\cdot\|^2$.
\section{Preliminaries}
\label{sec:pre}
\subsection{Stochastic Differential Equations and Girsanov Theorem}
A stochastic differential equation (SDE) $X=(X_t)_{t\in[0,T]}$ is a stochastic process on $\Omega=C([0,T];\R^d)$, the space of continuous functions from $[0,T]$ to $\R^d$. The dynamics of $X$ are typically represented by the equation $\d X_t=b_t(X)\d t+\sigma_t(X)\d B_t$, $t\in[0,T]$,
where $(B_t)_{t\in[0,T]}$ is a standard Brownian motion in $\R^d$, and $b_t(X)\in\R^d$, $\sigma_t(X)\in\R^{d\times d}$ depends on $(X_s)_{s\in[0,t]}$. The \textbf{path measure} of $X$, denoted $\P^X$, characterizes the distribution of $X$ over $\Omega$ and is defined by $\P^X(A)=\prob(X\in A)$ for all measurable subset $A$ of $\Omega$. The following lemma, as a corollary of the Girsanov theorem \citep{ustunel2013transformation}, provides a methodology for computing the KL divergence between two path measures and serves as a crucial technical tool in our proof.
\begin{lemma}
    \label{lem:rn_path_measure}
    Assume we have the following two SDEs on $\Omega$:
    \begin{align*}
        \d X_t=a_t(X)\d t+\sqrt2\d B_t,~X_0\sim\mu;\qquad
        \d Y_t=b_t(Y)\d t+\sqrt2\d B_t,~Y_0\sim\nu.
    \end{align*}
    Let $\P^{X}$ and $\P^{Y}$ denote the path measures of $X$ and $Y$, respectively. Then
    $$\kl(\P^X\|\P^Y)=\kl(\mu\|\nu)+\frac{1}{4}\E_{X\sim\P^X}\int_0^T\|a_t(X)-b_t(X)\|^2\d t.$$
\end{lemma}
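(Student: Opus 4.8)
The plan is to combine a disintegration (chain-rule) identity for relative entropy with Girsanov's theorem. First I would split each path measure according to its initial condition, writing $\P^X=\mu\otimes\P^X_x$ and $\P^Y=\nu\otimes\P^Y_x$, where $\P^X_x$ (resp.\ $\P^Y_x$) denotes the law on $\Omega$ of the SDE with drift $a$ (resp.\ $b$) started deterministically from $x$; such regular conditional kernels exist because $\Omega$ is Polish. The chain rule for KL divergence then gives
\begin{equation*}
  \kl(\P^X\|\P^Y)=\kl(\mu\|\nu)+\E_{x\sim\mu}\,\kl\!\ro{\P^X_x\,\big\|\,\P^Y_x},
\end{equation*}
which reduces everything to the KL divergence between two diffusions sharing the diffusion coefficient $\sqrt2\,I$ and the same deterministic starting point $x$.

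I would then evaluate $\kl(\P^X_x\|\P^Y_x)$ via Girsanov. Let $W$ be the canonical process on $\Omega$, and set $\theta_t(W):=\tfrac1{\sqrt2}\ro{a_t(W)-b_t(W)}$. Girsanov's theorem identifies the log-density as
\begin{equation*}
  \log\de{\P^X_x}{\P^Y_x}(W)=\int_0^T\theta_t(W)\cdot\d B^Y_t-\frac12\int_0^T\norm{\theta_t(W)}^2\d t,
\end{equation*}
where $B^Y$ is the $\P^Y_x$-driving Brownian motion, i.e.\ $\d B^Y_t=\tfrac1{\sqrt2}\ro{\d W_t-b_t(W)\,\d t}$. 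Under $\P^X_x$ the process $B^X_t:=B^Y_t-\int_0^t\theta_s(W)\,\d s$ is a Brownian motion, so substituting $\d B^Y_t=\d B^X_t+\theta_t(W)\,\d t$ and taking $\E_{\P^X_x}$ (the Itô integral against $B^X$ having zero mean) leaves
\begin{equation*}
  \kl\!\ro{\P^X_x\,\big\|\,\P^Y_x}=\frac12\,\E_{\P^X_x}\int_0^T\norm{\theta_t(W)}^2\d t=\frac14\,\E_{\P^X_x}\int_0^T\norm{a_t(W)-b_t(W)}^2\d t.
\end{equation*}

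Finally I would integrate over $x\sim\mu$: since $\P^X_x$ is the conditional law of $X$ given $X_0=x$, the tower property turns the inner expectation into $\E_{X\sim\P^X}\int_0^T\norm{a_t(X)-b_t(X)}^2\d t$, and plugging this into the chain-rule identity yields the stated formula. The main obstacle is the rigorous justification of Girsanov --- specifically, ensuring the exponential is a genuine martingale (so that $\P^X_x$ is a probability measure and the density formula holds) and that the Itô integral against $B^X$ is a true martingale, not merely a local one, under the \emph{new} measure $\P^X_x$. In the generality claimed this is handled by localization: stop at the first exit time $\tau_n$ from the ball of radius $n$, establish the identity with $T$ replaced by $T\wedge\tau_n$, and let $n\to\infty$ using monotone convergence on the right-hand side, with the convention that both sides equal $+\infty$ when $\P^X\not\ll\P^Y$. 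This is precisely the content of the transformation-of-measure results cited, which the paper may simply invoke.
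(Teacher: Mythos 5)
Your proposal is correct, and it fills in the details of what the paper treats as a bibliographic citation: the paper gives no explicit proof of this lemma, stating only that it is ``a corollary of the Girsanov theorem'' with a reference to \cite{ustunel2013transformation}. The route you take --- disintegrating the path measures over the initial conditions, applying the chain rule for KL divergence to peel off $\kl(\mu\|\nu)$, invoking Girsanov for the conditional path measures with drift difference $\theta_t=(a_t-b_t)/\sqrt2$, and then taking expectation under the new measure so that the It\^o integral against the transformed Brownian motion vanishes --- is precisely the standard derivation such textbook statements encapsulate, so there is no divergence of method to report. You are also right to flag the genuine technical obstruction (verifying that the exponential local martingale is a true martingale, and that the stochastic integral is a true martingale under $\P^X_x$, via Novikov or localization, with the convention that both sides are $+\infty$ when absolute continuity fails); that is exactly the hypothesis under which the cited result holds, and the paper implicitly assumes it is satisfied for the drifts it later plugs in.
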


\subsection{Langevin Diffusion and Langevin Monte Carlo}
The \textbf{Langevin diffusion (LD)} with target distribution $\pi\propto\ee^{-V}$ is the solution to the SDE
\begin{equation}
    \d X_t=-\nabla V(X_t)\d t+\sqrt{2}\d B_t,~t\in[0,\infty);~X_0\sim\mu_0.
    \label{eq:ld}
\end{equation}
It is well-known that under mild conditions, $\pi$ is the unique stationary distribution this SDE, and when $\pi$ has good regularity properties, the marginal distribution of $X_t$ converges to $\pi$ as $t\to\pif$, so we can sample from $\pi$ by simulating \cref{eq:ld} for a long time. However, in most of the cases, LD is intractable to simulate exactly, and the Euler-Maruyama discretization of \cref{eq:ld} leads to the \textbf{Langevin Monte Carlo (LMC)} algorithm. LMC with step size $h>0$ and target distribution $\pi\propto\ee^{-V}$ is a Markov chain $\{X_{kh}\}_{k=0,1,...}$ constructed by iterating the following update rule:
\begin{equation}
    X_{(k+1)h}=X_{kh}-h\nabla V(X_{kh})+\sqrt{2}(B_{(k+1)h}-B_{kh}),~k=0,1,...;~X_0\sim\mu_0,
    \label{eq:lmc_disc}
\end{equation}
where $\{B_{(k+1)h}-B_{kh}\}_{k=0,1,...}\iid\n{0,hI}$.

\subsection{Isoperimetric Inequalities}
A probability measure $\pi$ on $\R^d$ satisfies a \textbf{log-Sobolev inequality (LSI)} with constant $C$, or $C$-LSI, if for all $f\in C^1(\R^d)$ with $\E_\pi{f^2}>0$,
$$\E_\pi{f^2\log\frac{f^2}{\E_\pi{f^2}}}\le2C\E_\pi{\|\nabla f\|^2}.$$
A probability measure $\pi$ on $\R^d$ satisfies a \textbf{Poincar\'e inequality (PI)} with constant $C$, or $C$-PI, if for all $f\in C^1(\R^d)$,
$$\var_\pi f\le C\E_\pi{\|\nabla f\|^2}.$$
It is worth noting that $\alpha$-strongly-log-concave distributions satisfy $\frac{1}{\alpha}$-LSI, and $C$-LSI implies $C$-PI \citep{bakry2014analysis}. It is established in \cite{vempala2019rapid} that when $\pi\propto\ee^{-V}$ satisfies $C$-LSI, the LD converges exponentially fast in KL divergence; furthermore, when the potential $V$ is $\beta$-smooth, the LMC also converges exponentially with a bias that vanishes when the step size approaches $0$.

\subsection{Wasserstein Distance and Curves of Probability Measures}
We briefly introduce several fundamental concepts in optimal transport, and direct readers to authoritative textbooks \citep{villani2008optimal,villani2021topics,ambrosio2008gradient,ambrosio2021lectures} for an in-depth exploration.

For two probability measures $\mu,\nu$ on $\R^d$ with finite second-order moments, the \textbf{Wasserstein-2 ($\text{W}_{\text{2}}$) distance} between $\mu$ and $\nu$ is defined as
$$W_2(\mu,\nu)=\inf_{\gamma\in\Pi(\mu,\nu)}\ro{\int\|x-y\|^2\gamma(\d x,\d y)}^{\frac{1}{2}},$$
where $\Pi(\mu,\nu)$ is the set of all couplings of $(\mu,\nu)$, i.e., probability measure $\gamma$ on $\R^d\times\R^d$ with $\gamma(A\times\R^d)=\mu(A)$ and $\gamma(\R^d\times A)=\nu(A)$, for all measurable set $A\subset\R^d$.

Given a vector field $v=(v_t:\R^d\to\R^d)_{t\in[a,b]}$ and a curve of probability measures $\rho=(\rho_t)_{t\in[a,b]}$ on $\R^d$ with finite second-order moments, we say that $v$ \textbf{generates} $\rho$ if the continuity equation $\partial_t\rho_t+\nabla\cdot(\rho_tv_t)=0$, $t\in[a,b]$ holds. The \textbf{metric derivative} of $\rho$ at $t\in[a,b]$ is defined as
$$|\dot\rho|_t:=\lim_{\delta\to0}\frac{W_2(\rho_{t+\delta},\rho_t)}{|\delta|},$$
which can be interpreted as the ``speed'' of this curve. If $|\dot\rho|_t$ exists and is finite for all $t\in[a,b]$, we say that $\rho$ is \textbf{absolutely continuous (AC)}. AC is a fairly weak regularity condition on the curve of probability measures. In \cref{lem:cond_ac}, we present a sufficient condition for AC as a formal statement.

The metric derivative and the continuity equation are closely related by the following important fact from \citet[Theorems 8.3.1 and 8.4.5]{ambrosio2008gradient}:
\begin{lemma}
    For an AC curve of probability measures $(\rho_t)_{t\in[a,b]}$, any vector field $(v_t)_{t\in[a,b]}$ that generates $(\rho_t)_{t\in[a,b]}$ satisfies $|\dot\rho|_t\le\|v_t\|_{L^2(\rho_t)}$ for a.e. $t\in[a,b]$. Moreover, there exists a unique vector field $(v^*_t)_{t\in[a,b]}$ generating $(\rho_t)_{t\in[a,b]}$ that satisfies $|\dot\rho|_t=\|v^*_t\|_{L^2(\rho_t)}$ for a.e. $t\in[a,b]$.
    \label{lem:metric}
\end{lemma}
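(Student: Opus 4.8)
The plan is to prove the two assertions separately. \textbf{The inequality.} I first treat the regular case where $v$ is Lipschitz in space, locally uniformly in $t$, and $\int_a^b\|v_t\|_{L^2(\rho_t)}^2\,\d t<\infty$, so that the non-autonomous ODE $\dot x(r)=v_r(x(r))$ has a well-defined flow $T_{s\to r}$; uniqueness for the transport equation with Lipschitz velocity then forces $\rho_r=(T_{s\to r})_{\#}\rho_s$. For $a\le s\le t<t'\le b$, the map $x\mapsto(T_{s\to t}x,T_{s\to t'}x)$ pushes $\rho_s$ to a coupling of $(\rho_t,\rho_{t'})$, whence
$$W_2^2(\rho_t,\rho_{t'})\le\int\|T_{s\to t'}x-T_{s\to t}x\|^2\,\rho_s(\d x)=\int\Big\|\int_t^{t'}v_r(T_{s\to r}x)\,\d r\Big\|^2\rho_s(\d x)\le(t'-t)\int_t^{t'}\|v_r\|_{L^2(\rho_r)}^2\,\d r,$$
using Cauchy--Schwarz and $\rho_r=(T_{s\to r})_{\#}\rho_s$. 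Dividing by $(t'-t)^2$ and letting $t'\downarrow t$, Lebesgue's differentiation theorem gives $|\dot\rho|_t\le\|v_t\|_{L^2(\rho_t)}$ for a.e. $t$. For a general Borel field $v$, I mollify in space: with $\rho_t^\varepsilon:=\rho_t*\eta_\varepsilon$ and $v_t^\varepsilon:=((\rho_t v_t)*\eta_\varepsilon)/\rho_t^\varepsilon$, convolution commutes with the continuity equation, so $v^\varepsilon$ generates $\rho^\varepsilon$; a Jensen-type estimate yields $\|v_t^\varepsilon\|_{L^2(\rho_t^\varepsilon)}\le\|v_t\|_{L^2(\rho_t)}$, and $v^\varepsilon$ is smooth in space. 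After an analogous smoothing in $t$, the regular case applies, and sending $\varepsilon\to0$ (using $W_2(\rho_t^\varepsilon,\rho_t)\to0$ and Fatou) gives the claim in general.

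\textbf{Existence of the optimal field.} I use a time-discretization. For partitions $a=t_0^n<\dots<t_{N_n}^n=b$ with vanishing mesh, connect $\rho_{t_i^n}$ to $\rho_{t_{i+1}^n}$ by a constant-speed $W_2$-geodesic to obtain a curve $\rho^n$ with velocity field $v^n$ of constant speed $W_2(\rho_{t_i^n},\rho_{t_{i+1}^n})/(t_{i+1}^n-t_i^n)$ on each subinterval. Absolute continuity gives $W_2(\rho_{t_i^n},\rho_{t_{i+1}^n})\le\int_{t_i^n}^{t_{i+1}^n}|\dot\rho|_r\,\d r$, hence $\int_a^b\|v^n_t\|_{L^2(\rho^n_t)}^2\,\d t\le\int_a^b|\dot\rho|_t^2\,\d t$ by Cauchy--Schwarz, and $\sup_t W_2(\rho^n_t,\rho_t)\to0$. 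The momenta $E^n(\d t,\d x):=v^n_t(x)\,\rho^n_t(\d x)\,\d t$ have uniformly bounded total variation, so along a subsequence $(\rho^n_t\,\d t,\,E^n)\rightharpoonup(\rho_t\,\d t,\,E)$ weakly-$*$. Joint lower semicontinuity of the Benamou--Brenier functional $(\mu,E)\mapsto\int|\d E/\d\mu|^2\,\d\mu$ (equal to $+\infty$ unless $E\ll\mu$) forces $E=v_t(x)\,\rho_t(\d x)\,\d t$ for a Borel field $v$ with $\int_a^b\|v_t\|_{L^2(\rho_t)}^2\,\d t\le\int_a^b|\dot\rho|_t^2\,\d t$, while linearity of the continuity equation makes it stable under the limit, so $v$ generates $\rho$. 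Together with the inequality already established, $|\dot\rho|_t\le\|v_t\|_{L^2(\rho_t)}$ and $\int\|v_t\|_{L^2(\rho_t)}^2\le\int|\dot\rho|_t^2$ force $\|v_t\|_{L^2(\rho_t)}=|\dot\rho|_t$ for a.e. $t$.

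\textbf{Uniqueness.} For a.e. $t$, if $v$ and $w$ both generate $\rho$, then $v_t-w_t$ satisfies $\int\nabla\phi\cdot(v_t-w_t)\,\d\rho_t=0$ for all $\phi\in C_c^\infty(\R^d)$, i.e., $v_t-w_t$ is $L^2(\rho_t)$-orthogonal to $\mathrm{Tan}_{\rho_t}:=\overline{\{\nabla\phi:\phi\in C_c^\infty(\R^d)\}}^{\,L^2(\rho_t)}$. Hence the generating fields form a closed affine subspace of $L^2(\rho_t;\R^d)$ with a unique minimal-norm element, namely the $L^2(\rho_t)$-projection of any fixed generator onto $\mathrm{Tan}_{\rho_t}$; call it $v^*_t$, and note that the correction lies in the kernel of $w\mapsto\nabla\cdot(\rho_t w)$, so $v^*$ still generates $\rho$, with $t\mapsto v^*_t$ Borel by a measurable-selection argument. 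Then $|\dot\rho|_t\le\|v^*_t\|_{L^2(\rho_t)}\le\|v_t\|_{L^2(\rho_t)}=|\dot\rho|_t$, so $v^*$ attains equality, and any other generator $w$ with $\|w_t\|_{L^2(\rho_t)}=|\dot\rho|_t$ a.e. is also of minimal norm, hence coincides with $v^*_t$ for a.e. $t$. The main obstacle is the existence step --- specifically the joint lower semicontinuity of the Benamou--Brenier functional under weak-$*$ convergence of momentum measures, and the Borel measurability in $t$ of the limiting field $v$ and of the projected field $v^*_t$; by contrast, the inequality direction becomes routine once the flow representation is available through mollification.
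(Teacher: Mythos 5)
The paper does not prove \cref{lem:metric}: it simply cites \citet[Theorems 8.3.1 and 8.4.5]{ambrosio2008gradient}, which are exactly these two assertions. Your argument is, in essence, a reconstruction of the proof in that reference, and the broad strokes are all correct: the flow-plus-Cauchy--Schwarz bound in the Lipschitz case, mollification via $\rho_t^\varepsilon=\rho_t*\eta_\varepsilon$ and $v_t^\varepsilon=((\rho_t v_t)*\eta_\varepsilon)/\rho_t^\varepsilon$ with the Jensen estimate $\|v_t^\varepsilon\|_{L^2(\rho_t^\varepsilon)}\le\|v_t\|_{L^2(\rho_t)}$, the piecewise-geodesic time discretisation with lower semicontinuity of the Benamou--Brenier functional for existence, and the identification of the optimal field as the $L^2(\rho_t)$-projection of any generator onto $\mathrm{Tan}_{\rho_t}=\overline{\{\nabla\phi:\phi\in C_c^\infty\}}^{L^2(\rho_t)}$ for uniqueness.

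A few spots are glossed over more than you may realise. First, in the mollification step you claim the smoothed field $v^\varepsilon$ puts you in ``the regular case,'' but $v^\varepsilon_t$ is only locally Lipschitz: both $\rho_t^\varepsilon$ and $(\rho_t v_t)*\eta_\varepsilon$ decay at infinity, so the quotient need not be globally Lipschitz, and the flow $T_{s\to r}$ is not automatically globally well-posed. The AGS argument circumvents this by either localising the flow, truncating, or replacing the flow argument with a duality/Lipschitz-test-function bound; your outline as written skips this. Second, the deduction from the space-time distributional identity $\nabla_x\cdot(\rho_t(v_t-w_t))=0$ to the slice-wise orthogonality $\int\nabla\phi\cdot(v_t-w_t)\,\d\rho_t=0$ for a.e.\ $t$ needs a countable dense family of test functions $\phi$ so that the exceptional null set can be made independent of $\phi$ --- a one-line but necessary addendum. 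Third, Borel measurability of $t\mapsto v^*_t$ really does require an argument (AGS obtain it from the construction, not from an abstract measurable-selection principle). You correctly flag the last two as the ``main obstacles,'' so the honest assessment is: same route as the standard reference, structurally sound, but with the usual technical checkpoints deferred rather than closed.
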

Finally, we define the \textbf{action} of an AC curve of probability measures $(\rho_t)_{t\in[a,b]}$ as $\int_a^b|\dot\rho|_t^2\d t$. As will be shown in the next section, the action is a key property characterizing the effectiveness of a curve in annealed sampling.
The following lemma provides additional intuition about the action and may be helpful for interested readers.

\begin{lemma}
    Given an AC curve of probability measures $(\rho_t)_{t\in[0,1]}$, and let $\cA$ be its action. Then
    \begin{enumerate}[wide=0pt]
        \item $\cA\ge W_2^2(\rho_0,\rho_1)$, and the equality is attained when $(\rho_t)_{t\in[0,1]}$ is a constant-speed Wasserstein geodesic, i.e., let two random variables $(X_0,X_1)$ follow the optimal coupling of $(\rho_0,\rho_1)$ such that $\E\|X_0-X_1\|^2=W_2^2(\rho_0,\rho_1)$, and define $\rho_t$ as the law of $(1-t)X_0+tX_1$.
        \item If $\rho_t$ satisfies $C_{\rm LSI}(\rho_t)$-LSI for all $t$, then 
        ${\cal A}\le\int_0^1C_{\rm LSI}(\rho_t)^2\|\partial_t\nabla\log\rho_t\|^2_{L^2(\rho_t)}{\rm d}t.$
        \item If $\rho_t$ satisfies $C_{\rm PI}(\rho_t)$-PI for all $t$, then 
        ${\cal A}\le\int_0^12C_{\rm PI}(\rho_t)\|\partial_t\log\rho_t\|_{L^2(\rho_t)}^2 {\rm d}t$.
    \end{enumerate}
    \label{lem:action_property}
\end{lemma}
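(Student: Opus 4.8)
The plan is to prove the three claims about the action $\cA=\int_0^1|\dot\rho|_t^2\,\d t$ in order, each relying on \cref{lem:metric} together with a variational characterization of the relevant object.

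\textbf{Claim 1.} For the lower bound, I would invoke \cref{lem:metric}: since $\rho$ is AC on $[0,1]$, there is a vector field generating it with $|\dot\rho|_t=\|v^*_t\|_{L^2(\rho_t)}$, and more importantly the metric derivative makes $[0,1]\ni t\mapsto\rho_t$ a curve in the metric space $(\mathcal P_2,W_2)$ whose length is $\int_0^1|\dot\rho|_t\,\d t$. By the triangle inequality (or the standard fact that length dominates distance in a metric space), $\int_0^1|\dot\rho|_t\,\d t\ge W_2(\rho_0,\rho_1)$. Then Cauchy--Schwarz gives $\cA=\int_0^1|\dot\rho|_t^2\,\d t\ge\ro{\int_0^1|\dot\rho|_t\,\d t}^2\ge W_2^2(\rho_0,\rho_1)$. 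For equality along the constant-speed geodesic $\rho_t=\mathrm{law}((1-t)X_0+tX_1)$: the displacement interpolation is a standard $W_2$-geodesic, so $W_2(\rho_s,\rho_t)=|t-s|\,W_2(\rho_0,\rho_1)$, hence $|\dot\rho|_t\equiv W_2(\rho_0,\rho_1)$ and $\cA=W_2^2(\rho_0,\rho_1)$. I can either cite \citet{ambrosio2008gradient} for the geodesic property or verify it directly by coupling: for $s<t$, $(1-s)X_0+sX_1$ and $(1-t)X_0+tX_1$ differ by $(t-s)(X_1-X_0)$, giving $W_2\le(t-s)W_2(\rho_0,\rho_1)$, and the reverse inequality follows from Claim 1's lower bound applied to the subcurve.

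\textbf{Claims 2 and 3.} The unified idea: by \cref{lem:metric}, $|\dot\rho|_t\le\|v_t\|_{L^2(\rho_t)}$ for \emph{any} vector field $v_t$ generating $\rho_t$ via the continuity equation $\partial_t\rho_t+\nabla\cdot(\rho_tv_t)=0$. So it suffices to exhibit one convenient generating vector field and bound its $L^2(\rho_t)$ norm. The natural choice is $v_t=\nabla\phi_t$ where $\phi_t$ solves the elliptic (Poisson-type) equation $-\nabla\cdot(\rho_t\nabla\phi_t)=\partial_t\rho_t$; equivalently, writing $\partial_t\rho_t=\rho_t\,\partial_t\log\rho_t$, we need $\nabla\cdot(\rho_t\nabla\phi_t)=-\rho_t\,\partial_t\log\rho_t$. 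For Claim 3, testing this equation against $\phi_t$ itself and integrating by parts gives $\|\nabla\phi_t\|_{L^2(\rho_t)}^2=-\int\phi_t\,\partial_t\log\rho_t\,\d\rho_t$; since $\int\partial_t\log\rho_t\,\d\rho_t=\partial_t\int\d\rho_t=0$, I may subtract the mean of $\phi_t$, apply Cauchy--Schwarz, and then the Poincar\'e inequality $\var_{\rho_t}\phi_t\le C_{\rm PI}(\rho_t)\|\nabla\phi_t\|_{L^2(\rho_t)}^2$ to obtain $\|\nabla\phi_t\|_{L^2(\rho_t)}^2\le C_{\rm PI}(\rho_t)^{1/2}\|\partial_t\log\rho_t\|_{L^2(\rho_t)}\|\nabla\phi_t\|_{L^2(\rho_t)}$, hence $|\dot\rho|_t^2\le\|\nabla\phi_t\|_{L^2(\rho_t)}^2\le C_{\rm PI}(\rho_t)\|\partial_t\log\rho_t\|_{L^2(\rho_t)}^2$ — almost the claimed bound (the factor $2$ is slack; I would just keep whatever constant the argument yields, or note the statement's constant is not tight). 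For Claim 2, the cleaner route is to use the alternative generating field coming from differentiating a transport map, or to note that $\partial_t\nabla\log\rho_t$ plays the role of a ``force'': one takes $v_t$ solving $\nabla\cdot(\rho_t v_t)=-\partial_t\rho_t$ but estimates it through the Helmholtz/Hodge decomposition and the LSI in its equivalent form as a bound on the inverse of the weighted Laplacian acting between suitable Sobolev spaces, giving $\|v_t\|_{L^2(\rho_t)}\le C_{\rm LSI}(\rho_t)\|\partial_t\nabla\log\rho_t\|_{L^2(\rho_t)}$. Squaring and integrating in $t$ yields Claim 2.

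\textbf{Main obstacle.} The delicate point is Claim 2: relating the $L^2(\rho_t)$ norm of the generating vector field to $\|\partial_t\nabla\log\rho_t\|_{L^2(\rho_t)}$ via the LSI. Unlike the Poincar\'e case, where the inequality directly controls $\var_{\rho_t}\phi_t$ by $\|\nabla\phi_t\|^2$, the LSI controls entropy rather than variance, so the estimate must go through an $H^1\to H^{-1}$ type bound for the weighted Laplacian $L_t=-\nabla\cdot(\rho_t\nabla\cdot)$ and a commutation/integration-by-parts identity converting $\partial_t\rho_t$ into something paired with $\partial_t\nabla\log\rho_t$ — essentially a Lichnerowicz-type spectral gap argument upgraded to the LSI constant. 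I would handle this by writing $\partial_t\rho_t=\nabla\cdot(\rho_t\nabla\phi_t)$, integrating by parts twice to get $\|\nabla\phi_t\|_{L^2(\rho_t)}^2=-\int\langle\nabla\phi_t,\nabla\partial_t\log\rho_t\rangle_{?}$-type pairing... actually more carefully $\int\nabla\phi_t\cdot\nabla\psi\,\d\rho_t=-\int\psi\,\partial_t\log\rho_t\,\d\rho_t$ for test $\psi$, and then choosing $\psi$ cleverly (related to the Witten Laplacian resolvent) so the LSI enters; I expect this requires the assumption that $\rho_t$ is smooth and strictly positive, which I would state. Everything else — the metric-space length argument, the displacement-interpolation equality, the Poincar\'e computation — is routine given the lemmas already available.
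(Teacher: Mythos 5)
Your handling of Claim 1 matches the paper's argument (Cauchy--Schwarz plus the length-dominates-distance fact in $(\mathcal P_2,W_2)$; equality for displacement interpolation). Your Claim 3 argument is genuinely different and works: the paper instead proves it by writing $|\dot\rho|_t$ as a $W_2$ incremental ratio and invoking the transport inequality $W_2^2(\mu,\pi)\le 2C_{\rm PI}\bigl(\int\tfrac{\d\mu}{\d\pi}\,\d\mu-1\bigr)$ implied by a Poincar\'e inequality (citing \cite{liu2020the}), then letting $\delta\to0$. Your direct route --- solve the Poisson equation $\nabla\cdot(\rho_t\nabla\phi_t)=-\partial_t\rho_t$, test against $\phi_t$, and apply PI to $\mathrm{Var}_{\rho_t}\phi_t$ --- is self-contained, avoids the external transport inequality, and in fact yields the sharper constant $C_{\rm PI}$ rather than $2C_{\rm PI}$.

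Claim 2 is where you have a real gap, and your instinct that it is ``the delicate point'' is right, but the fix you sketch would not give the stated bound. A resolvent estimate for the weighted Laplacian $L_t=-\nabla\cdot(\rho_t\nabla\,\cdot\,)$ on $L^2(\rho_t)$ is controlled by the \emph{spectral gap}, i.e.\ the Poincar\'e constant, not the LSI constant; ``LSI as a bound on $L_t^{-1}$'' is not an equivalence. Pushing that route will at best reproduce a PI-type bound, not the claimed $C_{\rm LSI}^2$. The paper's actual mechanism is to chain two distinct consequences of LSI across a KL pivot. For a measure $\pi$ satisfying $C$-LSI one has simultaneously the entropy--Fisher-information bound
$\kl(\mu\|\pi)\le\tfrac{C}{2}\,\E_\mu\bigl\|\nabla\log\tfrac{\d\mu}{\d\pi}\bigr\|^2$
and the Talagrand transport inequality (Otto--Villani)
$W_2^2(\mu,\pi)\le 2C\,\kl(\mu\|\pi)$.
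Composing them,
$W_2^2(\mu,\pi)\le C^2\,\E_\mu\bigl\|\nabla\log\tfrac{\d\mu}{\d\pi}\bigr\|^2$,
and applying this with $\pi=\rho_t$, $\mu=\rho_{t+\delta}$, dividing by $\delta^2$ and sending $\delta\to0$ gives
$|\dot\rho|_t^2\le C_{\rm LSI}(\rho_t)^2\,\|\partial_t\nabla\log\rho_t\|_{L^2(\rho_t)}^2$,
which integrates to Claim 2. So the missing idea is precisely that $W_2$ is bounded via KL twice over --- once by Talagrand and once by LSI proper --- and this is what produces the square of the LSI constant, a feature your PDE argument does not naturally generate.
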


The proof of \cref{lem:action_property} can be found in \cref{app:prf_action}. The lower bound above can be derived by definition or via variational representations of action and $\W_2$ distance, while the two upper bounds relate the action to the isoperimetric inequalities, indicating that finite action is a weaker assumption than requiring the LSI or PI constants along the trajectory. In fact, these bounds may not be tight, as demonstrated by the following surprising example, in which the action is polynomial with respect to certain problem parameters and even independent of some of them, whereas the LSI and PI constants exhibit exponential dependence. Its proof is detailed in \cref{app:prf_action}.

\begin{example}
    \label{exp:action}
    Consider the curve that bridges a target distribution $\rho_0$ to a readily sampleable distribution $\rho_1$, in the form $\rho_t=\rho_0*\n{0,2StI}$, $t\in[0,1]$, for some large positive number $S$. Let $\cA$ be the action of this curve. Then
    \begin{enumerate}[wide=0pt]
        \item $\cA=S\int_0^S\|\nabla\log p_s\|^2_{L^2(p_s)}\d s$, where $p_s=\rho_{s/S}=\rho_0*\n{0,2sI}$, $s\in[0,S]$.
        \item In particular, when $\rho_0$ is a mixture of Gaussian distribution $\sum_{i=1}^N w_i\n{\mu_i,\sigma^2I}$, then $\cA\le\frac{Sd}{2}\log\ro{1+\frac{2S}{\sigma^2}}$ is independent of the number of components $N$, the weights $w_i$, and the means $\mu_i$. However, in general, the LSI and PI constants may depend exponentially on $\max_{i,j}\|\mu_i-\mu_j\|$, the maximum distance between the means.
    \end{enumerate}
\end{example}

\section{Problem Setup}
\label{sec:prob_setup}
Recall that the rationale behind annealing involves a gradual transition from $\pi_0$, a simple distribution that is easy to sample from, to $\pi_1=\pi$, the more complex target distribution. Throughout this paper, we define a curve of probability measures 
$\ro{\pi_\theta}_{\theta\in[0,1]}$, along which we will apply annealing-based MCMC for sampling from $\pi$. For now, we do not specify the exact form of this curve, but instead introduce the following mild regularity assumption:
\begin{assumption}
    Each $\pi_\theta$ has a finite second-order moment, and the curve $(\pi_\theta)_{\theta\in[0,1]}$ is AC with finite action $\cA=\int_0^1|\dot\pi|_\theta^2\d\theta$.
    \label{assu:AC}
\end{assumption}
For the purpose of non-asymptotic analysis, we further introduce the following mild assumption on the target distribution $\pi\propto\ee^{-V}$ on $\R^d$, which is widely used in the field of sampling (see \cite{chewi2022log} for an overview):

\begin{assumption}
    The potential $V$ is $\beta$-smooth, and there exists a global minimizer $x_*$ of $V$ such that $\|x_*\|\le R$. Moreover, $\pi$ has finite second-order moment.
    \label{assu:pi}
\end{assumption}

With this foundational setup, we now proceed to introduce the annealed LD and annealed LMC algorithms. Our goal is to characterize the non-asymptotic complexity of annealed MCMC algorithms for sampling from possibly non-log-concave distributions.
\section{Analysis of Annealed Langevin Diffusion}
\label{sec:ald}
To elucidate the concept of annealing more clearly, we first consider the \textbf{annealed Langevin diffusion (ALD)} algorithm, which samples from the $\pi\propto\ee^{-V}$ by running LD with a dynamically changing target distribution. For the sake of this discussion, we assume the following: (i) we can exactly sample from $\pi_0$, (ii) the scores $(\nabla\log\pi_\theta)_{\theta\in[0,1]}$ are known in closed form, and (iii) we can exactly simulate any SDE with known drift and diffusion terms.

Fix a sufficiently long time $T$. We define a reparametrized curve of probability measures $(\pit_t:=\pi_{t/T})_{t\in[0,T]}$. Starting with an initial sample $X_0\sim\pi_0=\pit_0$, we run the SDE
\begin{equation}
    \d X_t=\nabla\log\pit_t(X_t)\d t+\sqrt{2}\d B_t,~t\in[0,T],
    \label{eq:ald}
\end{equation}
and ultimately output $X_T\sim\nu^\ald$ as an approximate sample from the target distribution $\pi$. Intuitively, when $\pit_t$ is changing slowly, the distribution of $X_t$ should closely resemble $\pit_t$, leading to an output distribution $\nu^\ald$ that approximates the target distribution. This turns out to be true, as is confirmed by the following theorem, which provides a convergence guarantee for the ALD process.

\begin{theorem}
    When choosing $T=\frac{\cA}{4\varepsilon^2}$, it follows that $\kl(\pi\|\nu^\ald)\le\varepsilon^2$.
    \label{thm:ald}
\end{theorem}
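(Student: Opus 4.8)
The plan is to use the Girsanov-based tool in \cref{lem:rn_path_measure} to compare the ALD process \cref{eq:ald} against a carefully chosen reference SDE whose marginals are \emph{exactly} the reparametrized curve $(\pit_t)_{t\in[0,T]}$. Concretely, by \cref{lem:metric} there is a unique optimal vector field $(v^*_t)_{t\in[0,T]}$ generating $(\pit_t)$ with $\|v^*_t\|_{L^2(\pit_t)}=|\dot{\pit}|_t$ for a.e.\ $t$; note $|\dot{\pit}|_t=\frac1T|\dot\pi|_{t/T}$ by the reparametrization. Consider the reference SDE
\begin{equation*}
    \d Y_t=\ro{\nabla\log\pit_t(Y_t)+v^*_t(Y_t)}\d t+\sqrt2\,\d B_t,~t\in[0,T],~Y_0\sim\pit_0,
\end{equation*}
which is the Fokker--Planck equation for $\pit_t$ augmented by the transport field $v^*_t$: since $\nabla\log\pit_t$ already generates the Langevin semigroup with stationary-in-the-frozen-sense behavior, the extra drift $v^*_t$ supplies exactly the continuity-equation correction, so the marginal law of $Y_t$ is $\pit_t$ for all $t$, and in particular $\text{Law}(Y_T)=\pit_T=\pi$.

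**Applying Girsanov and the data-processing inequality.** Both $X$ and $Y$ start from $\pit_0$, so $\kl(\mu\|\nu)=0$ in \cref{lem:rn_path_measure}, and with $a_t(X)=\nabla\log\pit_t(X_t)+v^*_t(X_t)$, $b_t(X)=\nabla\log\pit_t(X_t)$ (matching $Y$'s and $X$'s drifts to the lemma's convention, i.e.\ writing the ALD as the ``$b$'' process and the reference as the ``$a$'' process) we get
\begin{equation*}
    \kl(\P^Y\|\P^X)=\frac14\,\E_{Y\sim\P^Y}\int_0^T\norm{v^*_t(Y_t)}^2\d t=\frac14\int_0^T\norm{v^*_t}^2_{L^2(\pit_t)}\d t,
\end{equation*}
where the last equality uses $\text{Law}(Y_t)=\pit_t$. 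By the data-processing inequality applied to the time-$T$ marginal projection, $\kl(\pi\|\nu^\ald)=\kl(\text{Law}(Y_T)\|\text{Law}(X_T))\le\kl(\P^Y\|\P^X)$. It remains to evaluate the integral: $\int_0^T\|v^*_t\|^2_{L^2(\pit_t)}\d t=\int_0^T|\dot{\pit}|_t^2\d t=\int_0^T\frac1{T^2}|\dot\pi|_{t/T}^2\d t=\frac1T\int_0^1|\dot\pi|_\theta^2\d\theta=\frac{\cA}{T}$. Hence $\kl(\pi\|\nu^\ald)\le\frac{\cA}{4T}$, and substituting $T=\frac{\cA}{4\varepsilon^2}$ gives $\kl(\pi\|\nu^\ald)\le\varepsilon^2$.

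**Anticipated obstacle.** The routine parts are the reparametrization bookkeeping and the Girsanov computation; the genuinely delicate step is justifying that the reference SDE has marginals exactly $\pit_t$ and that Girsanov applies rigorously (finiteness of the exponential martingale / Novikov-type condition, well-posedness of the SDEs, and that $v^*_t$ plus $\nabla\log\pit_t$ is regular enough). The marginal claim should follow because the Fokker--Planck equation associated to the drift $\nabla\log\pit_t+v^*_t$ with diffusion $\sqrt2$ reads $\partial_t\rho_t=\nabla\cdot(\rho_t\nabla\log\rho_t-\rho_t\nabla\log\pit_t-\rho_t v^*_t)+\ldots$; plugging $\rho_t=\pit_t$ makes the Langevin part vanish identically and leaves $\partial_t\pit_t+\nabla\cdot(\pit_t v^*_t)=0$, which is exactly the continuity equation that $v^*_t$ satisfies by construction. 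A fully rigorous treatment may require mollification of the curve (as hinted by \cref{lem:cond_ac}) and a limiting argument, or invoking a weak/measure-theoretic version of Girsanov; I would handle this by first proving the bound under extra smoothness assumptions and then removing them by approximation, noting that the action $\cA$ is lower semicontinuous under the relevant convergence.
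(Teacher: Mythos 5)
Your proposal is correct and follows essentially the same route as the paper: you compare the ALD path measure to an augmented reference SDE whose drift $\nabla\log\pit_t+v^*_t$ keeps the marginal at $\pit_t$, invoke \cref{lem:rn_path_measure} to reduce $\kl$ of path measures to $\frac14\int_0^T\|v^*_t\|^2_{L^2(\pit_t)}\,\d t$, pick the metric-derivative-optimal field from \cref{lem:metric}, and then finish by the reparametrization $|\dot\pit|_t=\frac1T|\dot\pi|_{t/T}$ and the data-processing inequality. The paper's own proof is the same argument with $\P$ and $\Q$ in place of your $\P^Y$ and $\P^X$, and it dispatches the regularity issues you flag via a footnote assuming well-posedness of the Fokker--Planck equation rather than a mollification argument; your extra discussion of Novikov-type conditions is a reasonable aside but not part of the paper's treatment.
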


\begin{proof}
    Let $\Q$ be the path measure of ALD (\cref{eq:ald}) initialized at $X_0\sim\pit_0$, and define $\P$ as the path measure corresponding to the following reference SDE:
    \begin{equation}
        \d X_t=(\nabla\log\pit_t+v_t)(X_t)\d t+\sqrt{2}\d B_t,~X_0\sim\pit_0,~t\in[0,T].
        \label{eq:ald_ref}      
    \end{equation}
    The vector field $v=(v_t)_{t\in[0,T]}$ is designed such that $X_t\sim\pit_t$ for all $t\in[0,T]$. According to the Fokker-Planck equation\footnote{Here, we assume that the solution (in terms of the curve of probability measures) of the Fokker-Planck equation given the drift and diffusion terms exists and is unique, which holds under weak regularity conditions \citep{lebris2008existence}.}, this is equivalent to the following PDE:
    $$\partial_t\pit_t=-\nabla\cdot(\pit_t\ro{\nabla\log\pit_t+v_t})+\Delta\pit_t=-\nabla\cdot(\pit_tv_t),~t\in[0,T],$$
    which means that $v$ generates $\ro{\pit_t:=\pi_{t/T}}_{t\in[0,T]}$. We can compute $\kl(\P\|\Q)$ using \cref{lem:rn_path_measure}:
    $$\kl(\P\|\Q)=\frac{1}{4}\E_\P{\int_0^T\|v_t(X_t)\|^2\d t}=\frac{1}{4}\int_0^T\|v_t\|_{L^2(\pit_t)}^2\d t.$$
    Leveraging \cref{lem:metric}, among all vector fields $v$ that generate $\ro{\pit_t:=\pi_{t/T}}_{t\in[0,T]}$, we can choose the one that minimizes $\|v_t\|_{L^2(\pit_t)}$, thereby making $\|v_t\|_{L^2(\pit_t)}=|\dot\pit|_t$, the metric derivative. With the reparameterization $\pit_t=\pi_{t/T}$, we have the following relation by chain rule:
    $$|\dot\pit|_t=\lim_{\delta\to0}\frac{W_2(\pit_{t+\delta},\pit_t)}{|\delta|}=\lim_{\delta\to0}\frac{W_2(\pi_{(t+\delta)/T},\pi_{t/T})}{T|\delta/T|}=\frac{1}{T}|\dot\pi|_{t/T}.$$
    Employing the change-of-variable formula leads to
    $$\kl(\P\|\Q)=\frac{1}{4}\int_0^T|\dot\pit|_t^2\d t=\frac{1}{4T}\int_0^1|\dot\pi|_\theta^2\d\theta=\frac{\cA}{4T}.$$

    Finally, using data-processing inequality (see, e.g., \citet[Theorem 1.5.3]{chewi2022log}), with $T=\frac{\cA}{4\varepsilon^2}$,
    $$\kl(\pi\|\nu^\ald)=\kl(\P_T\|\Q_T)\le\kl(\P\|\Q)=\varepsilon^2,$$
    where $\P_T$ and $\Q_T$ stand for the marginal distributions of $\P$ and $\Q$ at time $T$, respectively.
\end{proof}

Let us delve deeper into the mechanics of the ALD. Although at time $t$ the SDE (\cref{eq:ald}) targets the distribution $\pit_t$, the distribution of $X_t$ does not precisely align with $\pit_t$. Nevertheless, by choosing a sufficiently long time $T$, we actually move on the curve $(\pi_\theta)_{\theta\in[0,1]}$ sufficiently slowly, thus minimizing the discrepancy between the path measure of $(X_t)_{t\in[0,T]}$ and the reference curve $(\pit_t)_{t\in[0,T]}$. Using the data-processing inequality, we can upper bound the error between the marginal distributions at time $T$ by the joint distributions of the two paths. In essence, \emph{moving more slowly, sampling more precisely}.

Our analysis primarily addresses the \emph{global} error across the entire curve of probability measures, rather than focusing solely on the \emph{local} error at time $T$. This approach is inspired by \cite{dalalyan2012sparse,chen2023sampling}, and stands in contrast to the isoperimetry-based analyses of LD (e.g., \cite{vempala2019rapid,chewi2022analysis,balasubramanian2022towards}), which focus on the decay of the KL divergence from the distribution of $X_t$ to the target distribution, and require LSI to bound the time derivative of this quantity. Notably, the total time $T$ needed to run the SDE depends solely on the action of the curve $(\pi_\theta)_{\theta\in[0,1]}$, obviating the need for assumptions related to log-concavity or isoperimetry.

It is also worth noting that ALD plays a critical role in the field of non-equilibrium stochastic thermodynamics \citep{seifert2012stochastic}. Recently, a refinement of the fluctuation theorem was discovered in \cite{chen2020stochastic,fu2021maximal}, showing that the irreversible entropy production in a stochastic thermodynamic system is equal to the ratio of a similar action integral and the duration of the process $T$, closely resembling \cref{thm:ald}.

\section{Analysis of Annealed Langevin Monte Carlo}
\label{sec:almc}
It is crucial to recognize that, in practice, running the algorithm requires knowledge of the score functions $(\nabla\log\pi_\theta)_{\theta\in[0,1]}$ in closed form. Additionally, simulating ALD (\cref{eq:ald}) necessitates discretization schemes, which introduces further errors. This section presents a detailed non-asymptotic convergence analysis for the \textbf{annealed Langevin Monte Carlo (ALMC)} algorithm, which is a practical approach for real-world implementations. 

The outline of this section is as follows. We will first introduce a family of the interpolation curve $(\pi_\theta)_{\theta\in[0,1]}$ that serves as the basis for our analysis. Next, we propose a discretization scheme tailored to the special structure of $(\pi_\theta)_{\theta\in[0,1]}$. Finally, we state the complexity analysis in \cref{thm:almc}, and provide an example demonstrating the improvement of our bounds over existing results.

\subsection{Choice of the Interpolation Curve}
We consider the following curve of probability measures on $\R^d$: 
\begin{equation}
    \pi_\theta\propto\exp\ro{-\eta(\theta)V-\frac{\lambda(\theta)}{2}\|\cdot\|^2},~\theta\in[0,1],
    \label{eq:pi_lambda}
\end{equation}
where the functions $\eta(\cdot)$ and $\lambda(\cdot)$ are called the \textbf{annealing schedule}. These functions must be differentiable and monotonic, satisfying the boundary conditions $\eta_0=\eta(0)\nearrow\eta(1)=1$ and $\lambda_0=\lambda(0)\searrow\lambda(1)=0$. The values of $\eta_0\in[0,1]$ and $\lambda_0\in(1,\pif)$ are chosen so that $\pi_0$ corresponds a Gaussian distribution $\n{0,\lambda_0^{-1}I}$ or can be efficiently sampled within $\Ot(1)$ steps of rejection sampling (see \cref{lem:sample_pi0} for details). We also remark that, under \cref{assu:pi}, $\pi_\theta$ has a finite second-order moment (see \cref{lem:pi_theta_2ordmom} the proof), ensuring the $\text{W}_2$ distance between $\pi_\theta$'s is well-defined.

This flexible interpolation scheme includes many general cases. For example, \cite{brosse2018normalizing} and \cite{ge2020estimating} used the schedule $\eta(\cdot)\equiv1$, while \cite{neal2001annealed} used the schedule $\lambda(\theta)=c(1-\eta(\theta))$. The key motivation for this interpolation is that when $\theta=0$, the quadratic term predominates, making the potential of $\pi_0$ strongly-convex with a moderate condition number, thus $\pi_0$ is easy to sample from; on the other hand, when $\theta=1$, $\pi_1$ is just the target distribution $\pi$. As $\theta$ gradually increases from $0$ to $1$, the readily sampleable distribution $\pi_0$ slowly transforms into the target distribution $\pi_1$. 

\subsection{The Discretization Algorithm}
With the interpolation curve $(\pi_\theta)_{\theta\in[0,1]}$ chosen as above, a straightforward yet non-optimal method to discretize \cref{eq:ald} involves employing the Euler-Maruyama scheme, i.e., 
$$X_{t+\Delta t}\approx X_t+\nabla\log\pit_t(X_t)\Delta t+\sqrt{2}(B_{t+\Delta t}-B_t),~0\le t<t+\Delta t\le T.$$
However, considering that $\nabla\log\pit_t(x)=-\eta\ro{\frac{t}{T}}\nabla V(x)-\lambda\ro{\frac{t}{T}}x$, the integral of the linear term can be computed in closed form, so we can use the exponential-integrator scheme \citep{zhang2023fast,zhang2023gddim} to further reduce the discretization error. Given the total time $T$, we define a sequence of points $0=\theta_0<\theta_1<...<\theta_M=1$, and set $T_\l=T\theta_\l$, $h_\l=T(\theta_\l-\theta_{\l-1})$. The exponential-integrator scheme is then expressed as
\begin{equation}
    \d X_t=\ro{-\eta\ro{\frac{t}{T}}\nabla V(X_{t_-})-\lambda\ro{\frac{t}{T}}X_t}\d t+\sqrt{2}\d B_t,~t\in[0,T],~X_0\sim\pi_0,
    \label{eq:almc_cont}
\end{equation}
where $t_-:=T_{\l-1}$ when $t\in[T_{\l-1},T_\l)$, $\l\in\sqd{1,M}$. The explicit update rule is detailed in \cref{alg:almc}, with $x_\l$ denoting $X_{T_\l}$, and the derivation of \cref{eq:almc_update} is presented in \cref{prf:almc_update}. Notably, replacing $\nabla V(X_{t_-})$ with $\nabla V(X_t)$ recovers the ALD (\cref{eq:ald}), and setting $\eta\equiv1$ and $\lambda\equiv0$ reduces to the classical LMC iterations.

\begin{algorithm}[h]
    \caption{Annealed LMC Algorithm}
    \label{alg:almc}
    \SetAlgoLined
    \KwIn{Target distribution $\pi\propto\ee^{-V}$, total time $T$, annealing schedule $\eta(\cdot)$ and $\lambda(\cdot)$, discrete points $\theta_0,...,\theta_M$.}
    For $0\le\theta<\theta'\le1$, define $\Lambda_0(\theta',\theta) =\exp\ro{-T\int_{\theta}^{\theta'}\lambda(u)\d u}$, $H(\theta',\theta)=T\int_{\theta}^{\theta'}\eta(u)\Lambda_0(u,\theta')\d u$, and $\Lambda_1(\theta',\theta) =\sqrt{2T\int_{\theta}^{\theta'}\Lambda_0^2(u,\theta')\d u}$\;

    Obtain a sample $x_0\sim\pi_0$ (e.g., using rejection sampling (\cref{alg:rej_pi0}))\;

    \For{$\l=1,2,...,M$}{
        Sample an independent Gaussian noise $\xi_\l\sim \n{0,I}$\;
        Update
        \begin{equation}
            x_\l=\Lambda_0(\theta_\l,\theta_{\l-1})x_{\l-1}-H(\theta_\l,\theta_{\l-1})\nabla V(x_{\l-1})+\Lambda_1(\theta_\l,\theta_{\l-1})\xi_\l.
            \label{eq:almc_update}
        \end{equation}
    }
    \KwOut{$x_M\sim\nu^\almc$, an approximate sample from $\pi$.}
\end{algorithm}

We illustrate the ALMC algorithm in \cref{fig:almc}. The underlying intuition behind this algorithm is that by setting a sufficiently long total time $T$, the trajectory of the continuous dynamic (i.e., annealed LD) approaches the reference trajectory closely, as established in \cref{thm:ald}. Additionally, by adopting sufficiently small step sizes $h_1,...,h_M$, the discretization error can be substantially reduced. Unlike traditional annealed LMC methods, which require multiple LMC update steps for each intermediate distribution $\pi_1,...,\pi_M$, our approach views the annealed LMC as a discretization of a continuous-time process. Consequently, it is adequate to perform \emph{a single step} of LMC towards each intermediate distribution $\pit_{T_1},...,\pit_{T_M}$, provided that $T$ is sufficiently large and the step sizes are appropriately small.

\begin{figure}[h]
    \centering
    \input{sections/nips_fig_almc}
    \caption{Illustration of the ALMC algorithm. The $\l$-th green arrow, proceeding from left to right, represents one step of LMC towards $\pit_{T_\l}$ with step size $h_\l$, while each red arrow corresponds to the application of the same transition kernel, initialized at $\pit_{T_{\l-1}}$ on the reference trajectory $\P$, which is depicted in purple. To evaluate $\kl(\P\|\Q)$, the Girsanov theorem implies that we only need to bound the aggregate KL divergence across each small interval (i.e., the sum of the blue ``distances'').}
    \label{fig:almc}
\end{figure}

\subsection{Convergence Analysis of the Algorithm}
The subsequent theorem provides a convergence guarantee for the annealed LMC algorithm, with a detailed proof available in \cref{prf:thm:almc}.

\begin{theorem}
    Under \cref{assu:pi,assu:AC}, \cref{alg:almc} can generate a distribution $\nu^\almc$ satisfying $\kl(\pi\|\nu^\almc)\le\varepsilon^2$ within 
    $$\Ot\ro{\frac{d\beta^2\cA^2}{\varepsilon^6}}$$
    calls to the oracle of $V$ and $\nabla V$ in expectation.
    \label{thm:almc}
\end{theorem}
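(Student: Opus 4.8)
The plan is to mirror the proof of \cref{thm:ald}: use the Girsanov formula (\cref{lem:rn_path_measure}) to compare the path measure $\Q$ of the continuous-time interpolation \cref{eq:almc_cont} of \cref{alg:almc} against a reference path measure $\P$ whose time-$t$ marginal equals $\pit_t=\pi_{t/T}$, and then invoke the data-processing inequality to get $\kl(\pi\|\nu^\almc)=\kl(\P_T\|\Q_T)\le\kl(\P\|\Q)$. Concretely, take $\P$ to be the law of $\d X_t=\ro{\nabla\log\pit_t+v_t}(X_t)\d t+\sqrt2\,\d B_t$, $X_0\sim\pi_0$, where $(v_t)$ is the minimal-norm vector field generating $(\pit_t)$ from \cref{lem:metric}; as in \cref{thm:ald}, Fokker--Planck forces $X_t\sim\pit_t$ under $\P$. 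Since $\nabla\log\pit_t(x)=-\eta(t/T)\nabla V(x)-\lambda(t/T)x$ whereas \cref{eq:almc_cont} has drift $-\eta(t/T)\nabla V(X_{t_-})-\lambda(t/T)X_t$, and both processes start from $\pi_0$, \cref{lem:rn_path_measure} gives
\[
\kl(\P\|\Q)=\tfrac14\,\E_\P\!\int_0^T\norm{\eta(t/T)\ro{\nabla V(X_{t_-})-\nabla V(X_t)}+v_t(X_t)}^2\d t.
\]
Splitting via $\norm{a+b}^2\le2\norm a^2+2\norm b^2$, using $\eta\le1$, $\beta$-smoothness of $V$, and $\E_\P\norm{v_t(X_t)}^2=\norm{v_t}_{L^2(\pit_t)}^2=|\dot\pit|_t^2$ (exactly as in \cref{thm:ald}, whose computation yields $\int_0^T|\dot\pit|_t^2\d t=\cA/T$), this is at most $\tfrac{\cA}{2T}+\tfrac{\beta^2}{2}\,\E_\P\!\int_0^T\norm{X_t-X_{t_-}}^2\d t$.

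The core of the argument is to control the one-step displacement $\E_\P\norm{X_t-X_{t_-}}^2$ under the reference process, for $t\in[T_{\l-1},T_\l)$. Writing $X_t-X_{T_{\l-1}}=\int_{T_{\l-1}}^t b^\P_s(X_s)\d s+\sqrt2\ro{B_t-B_{T_{\l-1}}}$ with $b^\P_s:=\nabla\log\pit_s+v_s$, Jensen's inequality and $\E\norm{\sqrt2(B_t-B_{T_{\l-1}})}^2=2d(t-T_{\l-1})$ give $\E_\P\norm{X_t-X_{T_{\l-1}}}^2\lesssim h_\l\int_{T_{\l-1}}^{T_\l}\E_\P\norm{b^\P_s(X_s)}^2\d s+d\,h_\l$; integrating in $t$, summing over $\l$, and writing $h:=\max_\l h_\l$ yields $\E_\P\int_0^T\norm{X_t-X_{t_-}}^2\d t\lesssim h^2I_b+d\,h\,T$, where $I_b:=\int_0^T\E_\P\norm{b^\P_s(X_s)}^2\d s$. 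Since $X_s\sim\pi_{s/T}$ under $\P$, $\E_\P\norm{b^\P_s(X_s)}^2\le2\E_{\pi_{s/T}}\norm{\nabla\log\pi_{s/T}}^2+2|\dot\pit|_s^2$; the second piece integrates to $2\cA/T$, while for the first the integration-by-parts identity $\E_{\pi_\theta}\norm{\nabla\log\pi_\theta}^2=\E_{\pi_\theta}\sq{\eta(\theta)\Delta V+\lambda(\theta)d}\le(\beta+\lambda_0)d$ (or, alternatively, $\norm{\nabla V(x)}\le\beta(\norm x+R)$ together with the second-moment bound on $\pi_\theta$ of \cref{lem:pi_theta_2ordmom}) gives $I_b\lesssim(\beta+\lambda_0)dT+\cA/T$. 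Collecting everything,
\[
\kl(\pi\|\nu^\almc)\;\lesssim\;\frac{\cA}{T}\;+\;\beta^2d\,h\,T\;+\;\beta^2(\beta+\lambda_0)\,d\,T\,h^2\;+\;\frac{\beta^2\cA\,h^2}{T}.
\]

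To finish, one sets $T\asymp\cA/\varepsilon^2$ (making the first term $\lesssim\varepsilon^2$, matching \cref{thm:ald}) and chooses a uniform step size $h\asymp\varepsilon^4/(\beta^2d\cA)$, i.e.\ $M=T/h\asymp\beta^2d\cA^2/\varepsilon^6$ steps; then the dominant discretization term $\beta^2d\,h\,T\lesssim\varepsilon^2$, while the two remaining $O(h^2)$ terms are $O(\varepsilon^6)$ and negligible, so $\kl(\pi\|\nu^\almc)\le\varepsilon^2$. Each iteration of \cref{alg:almc} uses one $\nabla V$ evaluation and drawing $x_0\sim\pi_0$ costs $\Ot(1)$ oracle calls in expectation (by the choice of $\eta_0,\lambda_0$; see \cref{lem:sample_pi0}), so the total expected oracle cost is $M+\Ot(1)=\Ot(d\beta^2\cA^2/\varepsilon^6)$. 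The main obstacle is the uniform control of $\E_\P\norm{X_t-X_{t_-}}^2$: it hinges on the reference drift $\nabla\log\pit_t+v_t$ being square-integrable against $\pi_{t/T}$ uniformly in $t$, which rests on the integration-by-parts bound for $\E_{\pi_\theta}\norm{\nabla\log\pi_\theta}^2$ and on second-moment control for the $\pi_\theta$'s (needed anyway to give the $\W_2$ geometry meaning); a secondary point is the routine justification of the Girsanov identity (finiteness of the relevant exponential moments and well-posedness of the Fokker--Planck flow), handled as in \cref{sec:ald}.
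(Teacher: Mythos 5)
Your proposal is correct and follows essentially the same strategy as the paper's proof: Girsanov plus data-processing against the reference Fokker--Planck SDE $\d X_t = (\nabla\log\pit_t + v_t)(X_t)\d t + \sqrt{2}\d B_t$ with the minimal-norm velocity field from \cref{lem:metric}; a one-step displacement estimate $\E_\P\norm{X_t - X_{t_-}}^2$ via Jensen; the second-moment bound $\E_{\pi_\theta}\norm{\nabla\log\pi_\theta}^2 \le (\eta(\theta)\beta + \lambda(\theta))d$ from \cref{lem:2ordmomlogccv}; and then $T \asymp \cA/\varepsilon^2$, $M \asymp d\beta^2\cA^2/\varepsilon^6$. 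The only cosmetic difference is that the paper carries the local quantities $h_\l$, $\eta(\theta_\l)$, $\lambda(\theta_{\l-1})$ through to the end and argues via Cauchy--Schwarz that uniform spacing minimizes $M$, whereas you replace them by global maxima up front and then directly verify uniform spacing works; both yield the same bound, since $\sum_\l h_\l^2 = hT$ for uniform steps. One thing worth making explicit: your ``negligible'' higher-order term $\beta^2(\beta+\lambda_0)dTh^2 \asymp \varepsilon^6/(\beta\cA)$ is only $\lesssim \varepsilon^2$ when $\varepsilon^4 \lesssim \beta\cA$, which is exactly the condition the paper hides inside the assertion $h_\l \lesssim 1/(\beta d)$ --- so this is not a gap relative to the paper, but it is an implicit small-$\varepsilon$ requirement in both arguments.
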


\begin{sketchofproof}
    Let $\Q$ be the path measure of ALMC (\cref{eq:ald}), the time-discretized sampling process, whose marginal distribution at time $T$ is the output distribution $\nu^\almc$. Again, let $\P$ denote the reference path measure of \cref{eq:ald_ref} used in the proof of \cref{thm:ald}, in which the same vector field $(v_t)_{t\in[0,T]}$ ensures that $X_t\sim\pit_t$ under $\P$ for all $t\in[0,T]$. Applying Girsanov theorem (\cref{lem:rn_path_measure}) and carefully dealing with the discretization error, we can upper bound $\kl(\P\|\Q)$ by
    $$\kl(\P\|\Q)\lesssim\sum_{\l=1}^{M}\ro{\frac{1+\eta(\theta_\l)^2\beta^2h_\l^2}{T}\int_{\theta_{\l-1}}^{\theta_\l}|\dot\pi|^2_\theta\d\theta+\eta(\theta_\l)^2\beta^2dh_\l^2\ro{1+h_\l\ro{\beta\eta(\theta_\l)+\lambda(\theta_{\l-1})}}}.$$

    The first summation is governed by the total time $T$, which pertains to the convergence of the continuous dynamic (i.e., ALD). Setting $T\asymp\frac{\cA}{\varepsilon^2}$ ensures that the first summation remains $O(\varepsilon^2)$, provided that the step size $h_\l$ is sufficiently small. The second summation addresses the discretization error, and it suffices to determine the appropriate value of $\theta_\l$ to minimize $M$, the total number of calls to the oracle of $\nabla V$ for discretizing the SDE. Combining $M$ with the complexity of sampling from $\pi_0$ determines the overall complexity of the algorithm. \qed
\end{sketchofproof}

Once again, our analysis relies on bounding the global error between two path measures by Girsanov theorem. The annealing schedule $\eta(\cdot)$ and $\lambda(\cdot)$ influences the complexity exclusively through the action $\cA$. This crucial identity, based on the choice of interpolation curve $(\pi_\theta)_{\theta\in[0,1]}$, significantly affects the effectiveness of ALMC. Our specific choice (\cref{eq:pi_lambda}), among all interpolation curves with closed-form scores, aims at simplifying the discretization error analysis. However, alternative annealing schemes could be explored to find the optimal one in practice. For instance, a recent paper \citep{fan2024pathguided} proposed an annealing scheme $\pi_\theta(x)\propto\pi _0((1-a\theta)x)^{1-\theta}\pi_1\left(\frac{x}{b+(1-b)\theta}\right)^\theta$ for some $a\in[0,1]$ and $b\in(0,1]$. The extra scaling factor $\frac{1}{b+(1-b)\theta}$ might improve mixing speed. We leave the choice of the interpolation curve as a topic for future research.

We also note that our \cref{assu:pi} encompasses strongly-log-concave target distributions. For sampling from these well-conditioned distributions via LMC, the complexity required to achieve $\varepsilon$-accuracy in TV distance scales as $\Ot(\varepsilon^{-2})$ \citep{chewi2022log}. However, using Pinsker inequality $\kl\ge2\tv^2$, our complexity to meet the same error criterion is $O(\varepsilon^{-6})$, indicating a significantly higher computational demand. While our discretization error is as sharp as existing works, the main reason for our worse $\varepsilon$-dependence is due to the time required for the continuous dynamics to converge, which is based on Girsanov theorem rather than LSI. While sharpening the $\varepsilon$-dependence based on Girsanov theorem remains a challenge, our $O(\varepsilon^{-6})$ dependence still outperforms all the other bounds in \cref{tab:comparison} without isoperimetry assumptions. 

Finally, we conclude by demonstrating an example of a class of mixture of Gaussian distributions, which illustrates how our analysis can improve the complexity bound from exponential to polynomial. The detailed proof is provided in \cref{app:prf_ex_mog}.

\begin{example}
    Consider a mixture of Gaussian distributions in $\R^d$ defined by $\pi=\sum_{i=1}^Np_i\n{y_i,\beta^{-1}I}$, where the weights $p_i>0$, $\sum_{i=1}^Np_i=1$, and $\|y_i\|=r$ for all $i\in\sqd{1,N}$. Consequently, the potential $V=-\log\pi$ is $B$-smooth, where $B=\beta(4r^2\beta+1)$. With an annealing schedule defined by $\eta(\cdot)\equiv1$ and $\lambda(\theta)=dB(1-\theta)^\gamma$ for some $1\le\gamma=O(1)$, it follows that $\cA=O\ro{d(r^2\beta+1)\ro{r^2+\frac{d}{\beta}}}$.
    \label{ex:mog}
\end{example}

To show the superiority of our theoretical results, consider a simplified scenario with $N=2$, $y_1=-y_2$, and $r^2\gg\beta^{-1}$. By applying \cref{ex:mog} and the Pinsker inequality, the total complexity required to obtain an $\varepsilon$-accurate sample in TV distance is $\Ot(d^3\beta^2r^4(r^4\beta^2\vee d^2)\varepsilon^{-6})$. In contrast, studies such as \cite{schlichting2019poincare,chen2021dimensionfree,dong2022spectral} indicate that the LSI constant of $\pi$ is $\Omega(\ee^{\Theta(\beta r^2)})$, so existing bounds in \cref{tab:comparison} suggest that LMC, to achieve the same accuracy, can only provide an exponential complexity guarantee of $\Ot(\ee^{\Theta(\beta r^2)}d\varepsilon^{-2})$.
\section{Experiments}
\label{sec:exp}

We conduct simple numerical experiments to verify the findings in \cref{ex:mog}, which demonstrate that for a certain class of mixture of Gaussian distributions, ALMC achieves polynomial convergence with respect to $r$. Specifically, we consider a target distribution comprising a mixture of $6$ Gaussians with uniform weights $\frac16$, means $\cu{\ro{r\cos\frac{k\pi}{3},r\sin\frac{k\pi}{3}}:k\in\sqd{0,5}}$, and the same covariance $0.1I$ (corresponding to $\beta=10$ in \cref{ex:mog}). We experimented $r\in\{2,5,10,15,20,25,30\}$, and compute the number of iterations required for the empirical KL divergence from the target distribution to the sampled distribution to fall below $0.2$ (\blue{blue} curve) and $0.1$ (\orange{orange} curve). The results, displayed in \cref{fig:exp}, use a log-log scale for both axes. The near-linear behavior of the curves in this plot confirms that the complexity depends polynomially on $r$, validating our theory. Further details of the experimental setup and implementation can be found in \cref{app:exp}.

\begin{figure}[h]
    \centering
    \includegraphics[width=0.5\linewidth]{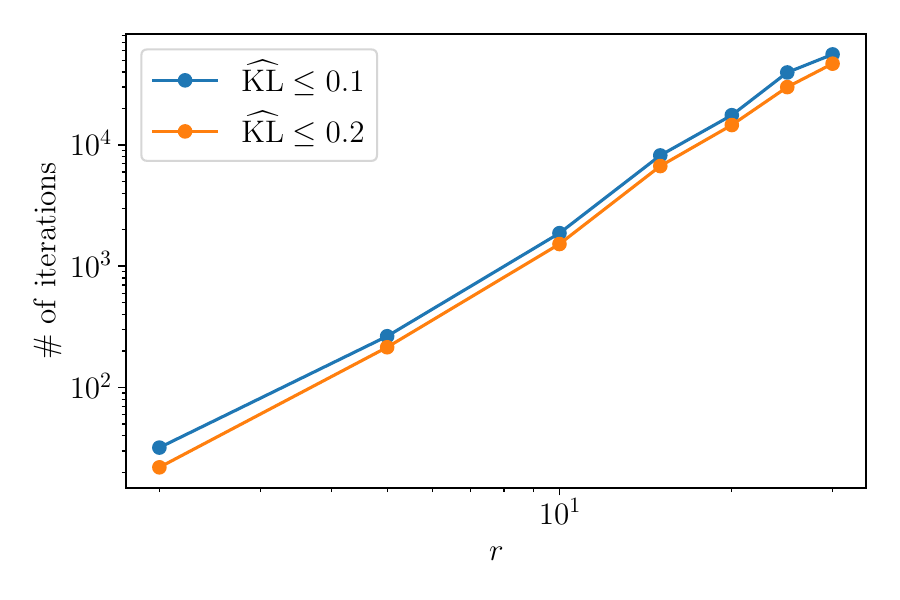}
    \caption{Relationship between norm of means $r$ and the number of iterations to reach $0.2$ and $0.1$ accuracy in KL divergence, both in $\log$ scale.}
    \label{fig:exp}
\end{figure}

\section{Conclusions and Future Work}
\label{sec:conc}
In this paper, we have explored the complexity of ALMC for sampling from a non-log-concave probability measure, circumventing the reliance on log-concavity or isoperimetric inequalities. Central to our analysis are the Girsanov theorem and optimal transport techniques, providing a novel approach. While our proof primarily focuses on the annealing scheme as described in \cref{eq:pi_lambda}, it can potentially be adapted to more general interpolations. Further exploration of these applications will be a key direction for future research. Technically, our proof methodology could be expanded to a broader range of target distributions beyond those with smooth potentials, such as those with H\"older-continuous gradients \citep{chatterji2020langevin,liang2022a,fan2023improved}. Eliminating the assumption of global minimizers of the potential function would further enhance the practical applicability of our algorithm. Finally, while this work emphasizes the upper bounds for non-log-concave sampling, exploring the tightness of these bounds and investigating potential lower bounds for this problem \citep{chewi2023query,chewi2023fisher} are intriguing avenues for future research.
\subsubsection*{Acknowledgments}
We would like to thank Ye He, Sinho Chewi, Matthew S. Zhang, Omar Chehab, Adrien Vacher, and Anna Korba for insightful discussions. MT is supported by the National Science Foundation under Award No. DMS-1847802, and WG and YC are supported by the National Science Foundation under Award No. ECCS-1942523 and DMS-2206576.

\bibliography{ref}
\bibliographystyle{iclr2025_conference}

\appendix

\section{Proof of Results on Action}
\label{app:prf_action}

\begin{lemma}[Sufficient Condition for Absolute Continuity (Informal)]
    Assume that a curve of probability distributions $(\pi_\theta)_{\theta\in[0,1]}$ on $\R^d$ has a density $\pi:[0,1]\times\R^d\ni(\theta,x)\mapsto\pi_\theta(x)>0$ that is jointly $C^1$. Then, this curve is AC.
    \label{lem:cond_ac}
\end{lemma}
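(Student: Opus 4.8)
The plan is to produce, for a.e.\ $\theta\in[0,1]$, a vector field $v_\theta$ that generates the curve $(\pi_\theta)$ and whose energy $\theta\mapsto\|v_\theta\|_{L^2(\pi_\theta)}$ is integrable on $[0,1]$; by the theory of absolutely continuous curves in Wasserstein space (the sufficiency direction of \citet[Theorem 8.3.1]{ambrosio2008gradient}, complementing \cref{lem:metric}) this forces $(\pi_\theta)$ to be AC with $|\dot\pi|_\theta\le\|v_\theta\|_{L^2(\pi_\theta)}<\infty$ for a.e.\ $\theta$, and hence $\cA=\int_0^1|\dot\pi|_\theta^2\,\d\theta<\infty$. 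The first routine point is that, since $\int\pi_\theta(x)\,\d x\equiv1$, joint $C^1$-ness (together with the mild decay the qualifier ``informal'' absorbs) lets one differentiate under the integral sign to get $\int\partial_\theta\pi_\theta(x)\,\d x=0$ for every $\theta$, which is precisely the compatibility condition making the continuity equation $\partial_\theta\pi_\theta+\nabla\cdot(\pi_\theta v_\theta)=0$ solvable.

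For the construction I would solve, at each fixed $\theta$, the weighted-elliptic (Langevin--Poisson) equation $-\nabla\cdot(\pi_\theta\nabla\phi_\theta)=\partial_\theta\pi_\theta$ and set $v_\theta:=\nabla\phi_\theta$; by design $v_\theta$ generates $(\pi_\theta)$. A transparent PDE-free substitute, convenient for checking the decay conditions in concrete cases, is a Knothe/partial-CDF vector field: coordinate by coordinate, integrate $\partial_\theta\pi_\theta$ in the remaining variables, form a cumulative-distribution-type potential, and divide by the appropriate conditional density; the resulting triangular field again satisfies the continuity equation and has an explicit energy integrand. Either way, the joint $C^1$ regularity makes the data $\pi_\theta,\nabla_x\pi_\theta,\partial_\theta\pi_\theta$ depend continuously on $\theta$, so $\theta\mapsto\|v_\theta\|_{L^2(\pi_\theta)}$ is continuous on the compact interval $[0,1]$ and therefore bounded by some finite $C_*$; thus $\int_0^1\|v_\theta\|_{L^2(\pi_\theta)}^2\,\d\theta\le C_*^2<\infty$, which is all that is needed to invoke the characterization above.

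The genuine obstacle --- and the reason the lemma is flagged ``informal'' --- is ensuring the generating vector field has \emph{finite} $L^2(\pi_\theta)$-energy: this requires the decay of $\partial_\theta\pi_\theta$ (and $\nabla_x\pi_\theta$) at infinity to be compatible with the tail of $\pi_\theta$, e.g.\ when $\pi_\theta$ is Gaussian-tailed one needs $\nabla\phi_\theta$ not to dominate $\sqrt{\pi_\theta}$; the very solvability of the weighted elliptic problem in a weighted $H^1$ space is likewise a tail/isoperimetry issue (a Poincar\'e inequality for $\pi_\theta$ would suffice). A fully rigorous statement therefore has to add such a decay hypothesis, which I would state explicitly and then verify for every curve the paper actually uses --- in particular the tilted-Gaussian family $\pi_\theta\propto\exp\bigl(-\eta(\theta)V-\tfrac{\lambda(\theta)}{2}\|\cdot\|^2\bigr)$ of \cref{eq:pi_lambda} and the Gaussian-smoothing curve of \cref{exp:action}, for which finiteness of the energy is immediate.
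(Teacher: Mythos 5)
Your argument is correct in spirit but proceeds by a genuinely different route from the paper's. You invoke the sufficiency direction of \citet[Theorem~8.3.1]{ambrosio2008gradient}: manufacture a generating vector field $v_\theta$ (via the weighted Poisson equation $-\nabla\cdot(\pi_\theta\nabla\phi_\theta)=\partial_\theta\pi_\theta$ or a Knothe-type triangular field), check that $\theta\mapsto\|v_\theta\|_{L^2(\pi_\theta)}$ is integrable, and conclude AC with $|\dot\pi|_\theta\le\|v_\theta\|_{L^2(\pi_\theta)}$. The paper instead avoids the continuity-equation machinery entirely: in one dimension it writes $W_2^2(\pi_\theta,\pi_{\theta+\delta})=\int_0^1|F_{\theta+\delta}^{-1}(q)-F_\theta^{-1}(q)|^2\,\d q$ using the inverse CDF, reads off $O(\delta^2)$ from joint $C^1$-ness of $F^{-1}$, and lifts to $\R^d$ via the one-dimensional subspace-robust distance $S_2$ and the inequality $W_2\le\sqrt d\,S_2$ of \citet[Proposition~2]{paty2019subspace}. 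Your construction has the advantage of tying directly into the metric-derivative/vector-field framework that \cref{lem:metric} and the rest of the paper use, and it clarifies that the real content of ``AC'' is the existence of a finite-energy drift; the paper's route is more elementary and sidesteps the weighted elliptic solvability issue (which, as you rightly note, secretly involves a Poincar\'e inequality), at the cost of the extra $\sqrt d$ from the projection step. Both proofs are deliberately informal and hide an analogous tail condition: you need $v_\theta\in L^2(\pi_\theta)$ uniformly in $\theta$, while the paper's $O(\delta^2)$ estimate needs $\partial_\theta F_\theta^{-1}$ to be uniformly square-integrable on $(0,1)$, which likewise controls behavior near $q\to0,1$, i.e.\ in the tails. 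Your explicit identification of this gap and the proposal to add a decay hypothesis and verify it for the curves actually used (\cref{eq:pi_lambda} and \cref{exp:action}) is sound and arguably more transparent than the paper's unstated uniformity assumption.
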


\begin{proof}
    The following proof is informal. We leave the task of formalizing this statement for future work.
    
    We first prove the result in one-dimension. Let $F_\theta(x)=\int_{-\infty}^x\pi_\theta(u){\rm d}u$ be the c.d.f. of $\pi_\theta$, which is strictly increasing as $\pi_\theta(x)>0$ everywhere. As a result, its inverse $F_\theta^{-1}$ is well-defined, and $F^{-1}_{\cdot}(\cdot)$ is also jointly $C^1$. We know from \citet[Theorem 6.0.2]{ambrosio2008gradient} that
    $$W _2^2(\pi_\theta,\pi_{\theta+\delta})=\int_0^1|F_{\theta+\delta}^{-1}(q)-F_\theta^{-1}(q)|^2{\rm d}q.$$
    The above quantity should be $O(\delta^2)$ as $\delta\to0$ due to the $C^1$ property, and hence the curve is AC.

    In $d$-dimensions, consider the one-dimensional subspace robust $\text{W}_\text{2}$ distance \citep{paty2019subspace} defined as 
    $$S_2(\mu,\nu)=\sup_{v\in\mathbb{S}^{d-1}}W_2(v_\sharp\mu,v_\sharp\nu),$$
    where $\mathbb{S}^{d-1}=\{v\in\R^d:\|v\|=1\}$ is the set of unit vectors in $\R^d$, and $v_\sharp\mu$ (respectively, $v_\sharp\nu$) is the law of $\inn{v,X}$ when $X\sim\mu$ (respectively, $X\sim\nu$), which is a probability measure in $\R$. By \citet[Proposition 2]{paty2019subspace}, $W_2(\mu,\nu)\le\sqrt{d}S_2(\mu,\nu)$. Similar argument as above shows that $W _2^2(v_\sharp\pi_\theta,v_\sharp\pi_{\theta+\delta})=O(\delta^2)$, and hence $W _2^2(\pi_\theta,\pi_{\theta+\delta})=O(\delta^2)$.
\end{proof}

\paragraph{Proof of \cref{lem:action_property}.}
\begin{enumerate}[wide=0pt]
    \item By Cauchy-Schwarz inequality,
    $$\cA=\int_0^1|\dot\rho|_t^2\d t\int_0^11\d t\ge\ro{\int_0^1|\dot\rho|_t\d t}^2.$$
    It remains to prove that $W_2(\rho_0,\rho_1)\le\int_0^1|\dot\rho|_t\d t$. In fact, take any sequence $0=t_0<t_1<...<t_N=1$, and let $\Delta t:=\max_{1\le i\le N}(t_i-t_{i-1})$. By triangle inequality of the $\text{W}_\text{2}$ distance,
    \begin{align*}
        W_2(\rho_0,\rho_1)&\le\sum_{i=1}^NW_2(\rho_{t_i},\rho_{t_{i-1}})\\
        &=\sum_{i=1}^N\frac{W_2(\rho_{t_i},\rho_{t_{i-1}})}{t_i-t_{i-1}}(t_i-t_{i-1})\\
        &=\sum_{i=1}^N(|\dot\rho|_{t_i}+o(\Delta t))(t_i-t_{i-1})\to\int_0^1|\dot\rho|_t\d t.
    \end{align*}
    
    Another derivation is by investigating the variational representations of these two quantities. First, by \cref{lem:metric}, one can write 
    $$\cA=\int_0^1|\dot\rho|_t^2\d t=\inf_{v_t:~\partial_t\rho_t+\nabla\cdot(\rho_tv_t)=0}\int_0^1\|v_t\|_{L^2(\rho_t)}^2{\rm d}t.$$
    On the other hand, the Benamou-Brenier formula \citep[Equation 8.0.3]{ambrosio2008gradient} implies
    $$W_2^2(\rho_0,\rho_1)=\inf_{(\rho_t,v_t):~\partial_t\rho_t+\nabla\cdot(\rho_tv_t)=0;~\rho_{t=0}=\rho_0,\rho_{t=1}=\rho_1}\int_0^1\|v_t\|_{L^2(\rho_t)}^2{\rm d}t.$$
    Thus, one can easily conclude the desired inequality.
    
    To show that the inequality is attained when $(\rho_t)_{t\in[0,1]}$ is the constant-speed Wasserstein geodesic, note that the derivation implies that the inequality is attained when $|\dot\rho|_t$ is a constant for all $t\in[0,1]$, and for any $0\le t_1<t_2<t_3\le1$, $W_2(\rho_{t_1},\rho_{t_2})+W_2(\rho_{t_2},\rho_{t_3})=W_2(\rho_{t_1},\rho_{t_3})$. We refer readers to \citet[Chapter 7.2]{ambrosio2008gradient} for the construction of the constant-speed Wasserstein geodesic.
    
    \item It is known from \cite{bakry2014analysis} that $\pi$ satisfies $C$-LSI implies 
    $$\kl(\mu\|\pi)\le\frac{C}{2}\E_\mu\norm{\nabla\log\de{\mu}{\pi}}^2,~~~~\kl(\mu\|\pi)\ge\frac{1}{2C}W_2^2(\mu,\pi),~\forall\mu.$$
    Therefore,
    $$\frac{1}{\delta^2}W_2^2(\rho_{t+\delta},\rho_t)\le C_{\rm LSI}(\rho_t)^2\int\frac{\|\nabla\log\rho_{t+\delta}-\nabla\log\rho_t\|^2}{\delta^2}{\rm d}\rho_{t+\delta}.$$
    By letting $\delta\to0$ and assuming regularity conditions, we have 
    $$|\dot\rho|_t^2\le C_{\rm LSI}(\rho_t)^2\int\|\partial_t\nabla\log\rho_t\|^2{\rm d}\rho_t=C_{\rm LSI}(\rho_t)^2\|\partial_t\nabla\log\rho_t\|^2_{L^2(\rho_t)}$$.
    \item By \cite{liu2020the}, we know that $\pi$ satisfies $C$-PI implies 
    $$W_2^2(\mu,\pi)\le2C\ro{\int\de{\mu}{\pi}\d\mu-1},~\forall\mu.$$
    Therefore,
    $$\frac{1}{\delta^2}W_2^2(\rho_{t+\delta},\rho_t)\le2C_{\rm PI}(\rho_t)\int\frac{(\rho_{t+\delta}-\rho_t)^2}{\delta^2\rho_t}{\rm d}x.$$
    By letting $\delta\to0$, we have 
    $$|\dot\rho|_t^2\le2C_{\rm PI}(\rho_t)\int\frac{(\partial_t\rho_t)^2}{\rho_t}{\rm d}x=2C_{\rm PI}(\rho_t)\|\partial_t\log\rho_t\|_{L^2(\rho_t)}^2.$$
\end{enumerate}  
\hfill$\square$

\paragraph{Proof of \cref{exp:action}.} 
\begin{enumerate}[wide=0pt]
\item The reparameterized curve $(p_s)_{s\in[0,S]}$ satisfies the standard heat equation: 
$$\partial_sp_s=\Delta p_s\implies\partial_sp_s+\nabla\cdot(p_s(-\nabla\log p_s))=0,$$
which means $(-\nabla\log p_s)_{s\in[0,S]}$ generates $(p_s)_{s\in[0,S]}$. According to the uniqueness result in \cref{lem:metric} and also \citet[Theorem 8.3.1]{ambrosio2008gradient}, the Lebesgue-a.e. unique vector field $(v_t^*)_{t\in[a,b]}$ that generates  $(\rho_t)_{t\in[a,b]}$ and satisfies $|\dot\rho|_t=\|v^*_t\|_{L^2(\rho_t)}$ for Lebesgue-a.e. $t\in[a,b]$ can be written in a gradient field $v^*_t=\nabla\phi_t$ for some $\phi_t:\R^d\to\R$. This implies that 
$$|\dot p|_s=\|\nabla\log p_s\|^2_{L^2(p_s)}.$$
By the time change $[0,S]\ni s\mapsto \theta=s/S\in[0,1]$ and taking integral, we obtain the desired result.
\item Since $p_s=\sum_{i=1}^N w_i\n{\mu_i,(\sigma^2+2s)I}$, by \cref{lem:mog_smooth}, we know that $-\nabla^2\log p_s\preceq\frac{1}{\sigma^2+2s}I$. Thus \cref{lem:2ordmomlogccv} implies $\|\nabla\log p_s\|^2_{L^2(p_s)}\le\frac{d}{\sigma^2+2s}$, so 
$$\cA\le S\int_0^S\frac{d}{\sigma^2+2s}\d s=\frac{Sd}{2}\log\ro{1+\frac{2S}{\sigma^2}}.$$

Finally, the LSI and PI constant of mixture of Gaussian distributions are, in general, exponential with respect to the maximum distance between the means. To illustrate this, consider a simpler case where there are only 2 Gaussian components: $\rho=\frac12{\cal N}(0,\sigma^2I)+\frac12{\cal N}(y,\sigma^2I)$. \citet[Section 4.1]{schlichting2019poincare} has shown that $C_{\rm PI}(\rho)\le C_{\rm LSI}(\rho)\le\frac{\sigma^2}{2}\left({\rm e}^{\|y\|^2/\sigma^2}+3\right)$, while \citet[Proposition 1]{dong2022spectral} has shown that when $\|y\|\gtrsim\sigma\sqrt d$, $C_{\rm LSI}(\rho)\ge C_{\rm PI}(\rho)\gtrsim\frac{\sigma^4}{\|y\|^2}{\rm e}^{\Omega(\|y\|^2/\sigma^2)}$. \hfill$\square$
\end{enumerate}
\section{Sampling from $\pi_0$}
\label{app:sample_pi0}
\begin{lemma}
    When $\eta_0=0$, $\pi_0=\n{0,\lambda_0^{-1}I}$ can be sampled directly; when $\eta_0\in(0,1]$, we choose $\lambda_0=\eta_0d\beta$, so that under \cref{assu:pi}, it takes $O\ro{1\vee\log\frac{\eta_0\beta R^2}{d^2}}=\Ot(1)$ queries to the oracle of $V$ and $\nabla V$ in expectation to precisely sample from $\pi_0$ via rejection sampling.
    \label{lem:sample_pi0}
\end{lemma}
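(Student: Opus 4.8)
The plan is to dispatch $\eta_0=0$ at once — then $\pi_0\propto\exp\ro{-\tfrac{\lambda_0}{2}\norm{\cdot}^2}$ is exactly $\n{0,\lambda_0^{-1}I}$, sampled directly — and concentrate on $\eta_0\in(0,1]$, where I would construct an \emph{exact} rejection sampler with a Gaussian proposal. Write $U:=\eta_0V+\tfrac{\lambda_0}{2}\norm{\cdot}^2$ so that $\pi_0\propto\ee^{-U}$, and take $\lambda_0=\eta_0d\beta$. Since $V$ is $\beta$-smooth, $\nabla^2U=\eta_0\nabla^2V+\lambda_0I$ obeys $\mu_0I\preceq\nabla^2U\preceq L_0I$ with $\mu_0:=\eta_0\beta(d-1)$, $L_0:=\eta_0\beta(d+1)$; thus $U$ is $\mu_0$-strongly convex and $L_0$-smooth, with condition number $L_0/\mu_0=\tfrac{d+1}{d-1}=1+O(1/d)$. (Here $d\ge2$; for $d=1$ one instead uses, e.g., $\lambda_0=2\eta_0\beta$.) This $O(1)$ conditioning is exactly what the scaling $\lambda_0=\eta_0d\beta$ buys.

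Next, the rejection scheme. For \emph{any} reference point $\hat m$, $\mu_0$-strong convexity gives the global lower bound $U(x)\ge U(\hat m)+\inn{\nabla U(\hat m),x-\hat m}+\tfrac{\mu_0}{2}\norm{x-\hat m}^2$, hence $\ee^{-U(x)}\le C\,g(x)$ with $C:=\ee^{-U(\hat m)}$ and $g(x):=\exp\ro{-\inn{\nabla U(\hat m),x-\hat m}-\tfrac{\mu_0}{2}\norm{x-\hat m}^2}$, an unnormalized Gaussian (mean $\hat m-\mu_0^{-1}\nabla U(\hat m)$, covariance $\mu_0^{-1}I$) from which we can draw exact samples. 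Rejection sampling from this envelope is exact, and the expected number of proposals is $C\int g\,\big/\int\ee^{-U}$. A Gaussian integral gives $\int g=(2\pi/\mu_0)^{d/2}\ee^{\norm{\nabla U(\hat m)}^2/(2\mu_0)}$, and $L_0$-smoothness at the minimizer $m:=\arg\min U$ gives $\ee^{-U(x)}\ge\ee^{-U(m)}\ee^{-\tfrac{L_0}{2}\norm{x-m}^2}$, so $\int\ee^{-U}\ge\ee^{-U(m)}(2\pi/L_0)^{d/2}$. Using $U(m)\le U(\hat m)$ and $\ro{\tfrac{d+1}{d-1}}^{d/2}\le\ee^{d/(d-1)}\le\ee^2$,
\[
\E[\#\text{proposals}]\ \le\ \ro{\frac{L_0}{\mu_0}}^{d/2}\ee^{\norm{\nabla U(\hat m)}^2/(2\mu_0)}\ \le\ \ee^2\,\ee^{\norm{\nabla U(\hat m)}^2/(2\mu_0)}.
\]
Hence it suffices to produce $\hat m$ with $\norm{\nabla U(\hat m)}^2=O(\mu_0)=O(\eta_0\beta d)$: then the loop uses $O(1)$ expected proposals, hence $O(1)$ expected queries of $V$.

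To produce such an $\hat m$: from $\nabla V(x_*)=0$, $\beta$-smoothness, and $\norm{x_*}\le R$ we get $\norm{\nabla U(0)}=\eta_0\norm{\nabla V(0)}\le\eta_0\beta R$. If $\eta_0\beta R^2\lesssim d$, take $\hat m=0$, since then $\norm{\nabla U(0)}^2\le\eta_0^2\beta^2R^2=O(\eta_0\beta d)$ and no $\nabla V$-queries are spent. Otherwise run gradient descent on $U$ from the origin: $\mu_0$-strong convexity gives $\norm{m}=\norm{m-0}\le\mu_0^{-1}\norm{\nabla U(0)}\le R/(d-1)$, so we start within $O(R/d)$ of the minimizer, and since the condition number is $O(1)$, GD drives $\norm{\nabla U}$ down geometrically; $\Ot(1)$ iterations — and tracking constants, $O\ro{1\vee\log\tfrac{\eta_0\beta R^2}{d^2}}$ of them, by comparing the initial distance $O(R/d)$ with the target accuracy $\asymp(\eta_0\beta d)^{-1/2}$ — bring $\norm{\nabla U(\hat m)}^2$ to $O(\mu_0)$. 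Each GD step costs one $\nabla V$-query (as $\nabla U(x)=\eta_0\nabla V(x)+\lambda_0x$), and forming $U(\hat m),\nabla U(\hat m)$ for the proposal costs $O(1)$ more; adding the $O(1)$ expected rejection cost gives the stated $O\ro{1\vee\log\tfrac{\eta_0\beta R^2}{d^2}}=\Ot(1)$ expected oracle calls.

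The main delicacy is the interplay in the second step: the GD stopping threshold must be small enough that $\ee^{\norm{\nabla U(\hat m)}^2/(2\mu_0)}=O(1)$ yet coarse enough that the iteration count stays $\Ot(1)$, and this works precisely because $\lambda_0=\eta_0d\beta$ makes the condition number $1+O(1/d)$ — keeping $(L_0/\mu_0)^{d/2}$ bounded, whereas a much smaller $\lambda_0$ would let this factor grow exponentially in $d$ — and makes $\mu_0\asymp\eta_0\beta d$, together with the fact that the mode of $\pi_0$ lies within $O(R/d)$ of the origin so the optimization essentially starts at the mode. One further point worth recording: the sampler is genuinely exact, since $\ee^{-U}\le Cg$ is the exact global consequence of $\nabla^2U\succeq\mu_0I$; approximating $\hat m$ affects only the runtime, never the law of the output.
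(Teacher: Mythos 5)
Your proposal is correct and follows essentially the same strategy as the paper: reduce to rejection sampling with a Gaussian envelope obtained from the $\mu_0$-strong convexity of $U$ at a reference point $\hat m$, note the condition number is $\tfrac{d+1}{d-1}=1+O(1/d)$, bound $\norm{m}$ by $O(R/d)$ using the minimizer of $V$, and run $O(1)$-conditioned gradient descent for $\Ot(1)$ steps. One genuine (if inconsequential) difference is in bounding the expected proposal count: you lower-bound $\int\ee^{-U}$ using $L_0$-smoothness anchored at the \emph{minimizer} $m$, which yields the requirement $\norm{\nabla U(\hat m)}^2\lesssim\mu_0\asymp\eta_0\beta d$; the paper instead lower-bounds $\int\ee^{-U}$ using smoothness anchored at the \emph{same} reference point $x'$, so the two Gaussian integrals share a quadratic center and the exponent partially cancels, giving the looser requirement $\norm{\nabla U}^2\lesssim\eta_0\beta d^2$. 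Your $\sqrt d$-tighter tolerance costs a few extra GD iterations, but since the per-step contraction factor is $\asymp 1/d$ (so each iteration gains $\asymp\log d$ in the exponent), this is absorbed and both arguments yield the stated $O\bigl(1\vee\log\tfrac{\eta_0\beta R^2}{d^2}\bigr)=\Ot(1)$; you should, however, carry the $\log d$ bookkeeping explicitly rather than asserting the final count, since with the coarser $\ee^{-t}$ contraction (as the paper uses) your tighter threshold would give $\log\tfrac{\eta_0\beta R^2}{d}$ instead of $\log\tfrac{\eta_0\beta R^2}{d^2}$.
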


\begin{proof}
    Our proof is inspired by \cite{liang2023a}. We only consider the nontrivial case $\eta_0\in(0,1]$. The potential of $\pi_0$ is $V_0:=\eta_0V+\frac{\lambda_0}{2}\|\cdot\|^2$, which is $(\lambda_0-\eta_0\beta)$-strongly-convex and $(\lambda_0+\eta_0\beta)$-smooth. Note that for any fixed point $x'\in\R^d$,
    $$\pi_0(x)\propto\exp\ro{-V_0(x)}\le\exp\ro{-V_0(x')-\inn{\nabla V_0(x'),x-x'}-\frac{\lambda_0-\eta_0\beta}{2}\|x-x'\|^2}.$$
    The r.h.s. is the unnormalized density of $\pi_0':=\n{x'-\frac{\nabla V_0(x')}{\lambda_0-\eta_0\beta},\frac{1}{\lambda_0-\eta_0\beta}I}$. The rejection sampling algorithm is as follows: sample $X\sim\pi_0'$, and accept $X$ as a sample from $\pi_0$ with probability
    $$\exp\ro{-V_0(X)+V_0(x')+\inn{\nabla V_0(x'),X-x'}+\frac{\lambda_0-\eta_0\beta}{2}\|x-x'\|^2}\in(0,1].$$

    By \citet[Theorem 7.1.1]{chewi2022log}, the number of queries to the oracle of $V$ until acceptance follows a geometric distribution with mean
    \begin{align*}
         & \frac{\int\exp\ro{-V_0(x')-\inn{\nabla V_0(x'),x-x'}-\frac{\lambda_0-\eta_0\beta}{2}\|x-x'\|^2}\d x}{\int\exp\ro{-V_0(x)}\d x}                                                                                                                                                                                                            \\
         & \le\frac{\int\exp\ro{-V_0(x')-\inn{\nabla V_0(x'),x-x'}-\frac{\lambda_0-\eta_0\beta}{2}\|x-x'\|^2}\d x}{\int\exp\ro{-V_0(x')-\inn{\nabla V_0(x'),x-x'}-\frac{\lambda_0+\eta_0\beta}{2}\|x-x'\|^2}\d x}                                                                                                                                 \\
         & =\frac{\int\exp\ro{-\inn{\nabla V_0(x'),x}-\frac{\lambda_0-\eta_0\beta}{2}\|x\|^2}\d x}{\int\exp\ro{-\inn{\nabla V_0(x'),x}-\frac{\lambda_0+\eta_0\beta}{2}\|x\|^2}\d x}                                                                                                                                                               \\
         & =\frac{\int\exp\ro{-\frac{\lambda_0-\eta_0\beta}{2}\norm{x+\frac{\nabla V_0(x')}{\lambda_0-\eta_0\beta}}^2+\frac{\|\nabla V_0(x')\|^2}{2(\lambda_0-\eta_0\beta)}}\d x}{\int\exp\ro{-\frac{\lambda_0+\eta_0\beta}{2}\norm{x+\frac{\nabla V_0(x')}{\lambda_0+\eta_0\beta}}^2+\frac{\|\nabla V_0(x')\|^2}{2(\lambda_0+\eta_0\beta)}}\d x} \\
         & =\ro{\frac{\lambda_0+\eta_0\beta}{\lambda_0-\eta_0\beta}}^{\frac d2}\exp\ro{\frac{\eta_0\beta}{\lambda_0^2-\eta_0^2\beta^2}\|\nabla V_0(x')\|^2}                                                                                                                                                                                            \\
         & \le\exp\ro{\frac{\eta_0\beta d}{\lambda_0-\eta_0\beta}}\exp\ro{\frac{\eta_0\beta}{\lambda_0^2-\eta_0^2\beta^2}\|\nabla V_0(x')\|^2}
    \end{align*}
    We choose $\lambda_0=\eta_0\beta d$ such that $\exp\ro{\frac{\eta_0\beta d}{\lambda_0-\eta_0\beta}}\lesssim1$. With this $\lambda_0$, $\exp\ro{\frac{\eta_0\beta}{\lambda_0^2-\eta_0^2\beta^2}\|\nabla V_0(x')\|^2}$ is also $O(1)$ as long as $\|\nabla V_0(x')\|^2\lesssim\eta_0\beta d^2$.
    
    Let $x''$ be the global minimizer of the strongly convex potential function $V_0$, which satisfies
    $$0=\nabla V_0(x'')=\eta_0\nabla V(x'')+\lambda_0x''=\eta_0\nabla V(x'')+\eta_0\beta dx''.$$
    Given the smoothness of $V_0$,
    $$\|\nabla V_0(x')\|^2=\|\nabla V_0(x')-\nabla V_0(x'')\|^2\le(\eta_0\beta d+\eta_0\beta)^2\|x'-x''\|^2.$$
    Therefore, to guarantee $\|\nabla V_0(x')\|^2\lesssim\eta_0\beta d^2$, it suffices to find an $x'$ such that $\|x'-x''\|\lesssim\frac{1}{\sqrt{\eta_0\beta}}$.
    
    We first derive an upper bound of $\|x''\|$ under \cref{assu:pi}. Since $x_*$ is a global minimizer of $V$,
    \begin{align*}
        \beta d\|x''\|  & =\|\nabla V(x'')\|=\|\nabla V(x'')-\nabla V(x_*)\| \\
                           & \le\beta\|x''-x_*\|\le\beta(\|x''\|+R),            \\
        \implies\|x''\| & \le\frac{R}{d-1}\lesssim\frac{R}{d}.
    \end{align*}
    When $R=0$, i.e., $0$ is a known global minimizer of both $V$ and $V_0$, we can directly set $x'=0$; otherwise, we need to run optimization algorithms to find such an $x'$. According to standard results in convex optimization (see, e.g., \citet[Theorem 3.6]{garrigos2023handbook}), running gradient descent on function $V_0$ with step size $\frac{1}{\lambda_0+\eta_0\beta}$ yields the following convergence rate: starting from $x_0=0$, the $t$-th iterate $x_t$ satisfies
    $$\|x_t-x''\|^2\le\ro{1-\frac{\lambda_0-\eta_0\beta}{\lambda_0+\eta_0\beta}}^t\|0-x''\|^2\lesssim\ro{\frac{2}{d}}^t\frac{R^2}{d^2}\le\frac{R^2}{\ee^td^2},$$
    where the last inequality holds when $d\ge6$. Thus, $\log\frac{\eta_0\beta R^2}{d^2}+O(1)$ iterations are sufficient to find a desired $x'$. We summarize the rejection sampling in \cref{alg:rej_pi0}. In conclusion, precisely sampling from $\pi_0$ requires
    $$O\ro{1\vee\log\frac{\eta_0\beta R^2}{d^2}}=\Ot(1)$$
    calls to the oracle of $V$ and $\nabla V$ in expectation.
\end{proof}

\begin{algorithm}[h]
    \caption{Rejection Sampling for $\pi_0$.}
    \label{alg:rej_pi0}
    \SetAlgoLined
    \KwIn{$\pi\propto\exp\ro{-V}$, $\eta_0\in(0,1]$, $\lambda_0=\eta_0\beta d$.}
    Let $V_0:=\eta_0V+\frac{\lambda_0}{2}\|\cdot\|^2$\;
    Use gradient descent to find an $x'$ that is $O\ro{\frac{1}{\sqrt{\eta_0\beta}}}$-close to the global minimizer of $V_0$\;
    Let $\pi_0':=\n{x'-\frac{\nabla V_0(x')}{\lambda_0-\eta_0\beta},\frac{1}{\lambda_0-\eta_0\beta}I}$\;
    \While{True}{
        Sample $X\sim\pi_0'$ and $U\in[0,1]$ independently\;
        Accept $X$ as a sample of $\pi_0$ when
        $$U\le\exp\ro{-V_0(X)+V_0(x')+\inn{\nabla V_0(x'),X-x'}+\frac{\lambda_0-\eta_0\beta}{2}\|x-x'\|^2}.$$
    }
    \KwOut{$X\sim\pi_0$.}
\end{algorithm}

\begin{remark}
    The parameter $R$ reflects prior knowledge about global minimizer(s) of the potential function $V$. Unless it is exceptionally large, indicating scarce prior information about the global minimizer(s) of $V$, this $\Ot(1)$ complexity is negligible compared to the overall complexity of sampling. In particular, when the exact location of a global minimizer of $V$ is known, we can adjust the potential $V$ so that $0$ becomes a global minimizer, thereby reducing the complexity to $O(1)$.
\end{remark}

\section{Proofs for Annealed LMC}
\subsection{Proof of \cref{eq:almc_update}}
\label{prf:almc_update}

Define $\varLambda(t):=\int_0^t\lambda\ro{\frac{\tau}{T}}\d\tau$, whose derivative is $\varLambda'(t)=\lambda\ro{\frac{t}{T}}$. By It\^o's formula, we have
$$\d\ro{\ee^{\varLambda(t)}X_t}=\ee^{\varLambda(t)}\ro{\lambda\ro{\frac{t}{T}}X_t\d t+\d X_t}=\ee^{\varLambda(t)}\ro{-\eta\ro{\frac{t}{T}}\nabla V(X_{t_-})\d t+\sqrt{2}\d B_t}.$$

Integrating over $t\in[T_{\l-1},T_\l)$ (note that in this case $t_-=T_{\l-1}$), we obtain
\begin{align*}
    \ee^{\varLambda(T_\l)}X_{T_\l}-\ee^{\varLambda(T_{\l-1})}X_{T_{\l-1}} & =-\ro{\int_{T_{\l-1}}^{T_\l}\eta\ro{\frac{t}{T}}\ee^{\varLambda(t)}\d t}\nabla V(X_{T_{\l-1}})+\sqrt{2}\int_{T_{\l-1}}^{T_\l}\ee^{\varLambda(t)}\d B_t,
\end{align*}
i.e.,
\begin{align*}
    X_{T_\l} & =\ee^{-(\varLambda(T_\l)-\varLambda(T_{\l-1}))}X_{T_{\l-1}}-\ro{\int_{T_{\l-1}}^{T_\l}\eta\ro{\frac{t}{T}}\ee^{-(\varLambda(T_\l)-\varLambda(t))}\d t}\nabla V(X_{T_{\l-1}}) \\
             & +\sqrt{2}\int_{T_{\l-1}}^{T_\l}\ee^{-(\varLambda(T_\l)-\varLambda(t))}\d B_t.
\end{align*}
Since
$$\varLambda(T_\l)-\varLambda(t)=\int_t^{T_\l}\lambda\ro{\frac{\tau}{T}}\d\tau=T\int_{t/T}^{T_\l/T}\lambda(u)\d u,$$
by defining $\Lambda_0(\theta',\theta)=\exp\ro{-T\int_{\theta}^{\theta'}\lambda(u)\d u}$, we have $\ee^{-(\varLambda(T_\l)-\varLambda(T_{\l-1}))}=\Lambda_0(\theta_\l,\theta_{\l-1})$. Similarly, we can show that
$$\int_{T_{\l-1}}^{T_\l}\eta\ro{\frac{t}{T}}\ee^{-(\varLambda(T_\l)-\varLambda(t))}\d t=\int_{T_{\l-1}}^{T_\l}\eta\ro{\frac{t}{T}}\Lambda_0\ro{\frac{T_\l}{T},\frac{t}{T}}\d t=T\int_{\theta_{\l-1}}^{\theta_\l}\eta(u)\Lambda_0(\theta_\l,u)\d u,$$
and $\sqrt{2}\int_{T_{\l-1}}^{T_\l}\ee^{-(\varLambda(T_\l)-\varLambda(t))}\d B_t$ is a zero-mean Gaussian random vector with covariance 
$$2\int_{T_{\l-1}}^{T_\l}\ee^{-2(\varLambda(T_\l)-\varLambda(t))}\d t\cdot I=2\int_{T_{\l-1}}^{T_\l}\Lambda_0^2\ro{\frac{T_\l}{T},\frac{t}{T}}\d t\cdot I=2T\int_{\theta_{\l-1}}^{\theta_\l}\Lambda_0^2(\theta_\l,u)\d u\cdot I.$$
\hfill$\square$

\subsection{Proof of \cref{thm:almc}}
\label{prf:thm:almc}
We denote the path measure of ALMC (\cref{eq:almc_cont}) by $\Q$. Then, $\Q_T$, the marginal distribution of $X_T$, serves as the output distribution $\nu^\almc$. Similar to the methodology in the proof of \cref{thm:ald}, we use $\P$ to denote the reference path measure of \cref{eq:ald_ref}, in which the vector field $(v_t)_{t\in[0,T]}$ generates the cure of probability distributions $(\pit_t)_{t\in[0,T]}$.

Using the data-processing inequality, it suffices to demonstrate that $\kl(\P\|\Q)\le\varepsilon^2$. By Girsanov theorem (\cref{lem:rn_path_measure}) and triangle inequality, we have
\begin{align*}
    \kl(\P\|\Q) & =\frac{1}{4}\int_0^T\E_\P{\norm{\eta\ro{\frac{t}{T}}\ro{\nabla V(X_t)-\nabla V(X_{t_-})}-v_t(X_t)}^2}\d t                                        \\
                & \lesssim \int_0^T\E_\P{\sq{\eta\ro{\frac{t}{T}}^2\|\nabla V(X_t)-\nabla V(X_{t_-})\|^2+\|v_t(X_t)\|^2}}\d t                                      \\
                & \le\sum_{\l=1}^{M}\eta\ro{\frac{T_\l}{T}}^2\beta^2\int_{T_{\l-1}}^{T_\l}\E_\P{\|X_t-X_{t_-}\|^2}\d t+\int_0^T\|v_t\|_{L^2(\pit_t)}^2\d t.
\end{align*}
The last inequality arises from the smoothness of $V$ and the increasing property of $\eta(\cdot)$. To bound $\E_\P{\|X_t-X_{t_-}\|^2}$, note that under $\P$, for $t\in[T_{\l-1},T_\l)$, we have
$$X_t-X_{t_-}=\int_{T_{\l-1}}^t\ro{\nabla\log\pit_\tau+v_\tau}(X_\tau)\d\tau+\sqrt2(B_t-B_{T_{\l-1}}).$$
Thanks to the fact that $X_t\sim\pit_t$ under $\P$,
\begin{align*}
    \E_\P{\|X_t-X_{t_-}\|^2} & \lesssim\E_\P{\norm{\int_{T_{\l-1}}^t\ro{\nabla\log\pit_\tau+v_\tau}(X_\tau)\d\tau}^2}+\E{\|\sqrt2(B_t-B_{T_{\l-1}})\|^2}                \\
                                 & \lesssim(t-{T_{\l-1}})\int_{T_{\l-1}}^t\E_\P{\|(\nabla\log\pit_\tau+v_\tau)(X_\tau)\|^2}\d\tau+d(t-{T_{\l-1}})                             \\
                                 & \lesssim  (t-T_{\l-1})\int_{T_{\l-1}}^t\ro{\|\nabla\log\pit_\tau\|^2_{L^2(\pit_\tau)}+\|v_\tau\|_{L^2(\pit_\tau)}^2}\d\tau+d(t-T_{\l-1}) \\
                                 & \lesssim h_\l\int_{T_{\l-1}}^{T_\l}\ro{\|\nabla\log\pit_\tau\|^2_{L^2(\pit_\tau)}+\|v_\tau\|_{L^2(\pit_\tau)}^2}\d\tau+dh_\l.
\end{align*}
The second inequality arises from the application of the Cauchy-Schwarz inequality, and the last inequality is due to the definition $h_\l=T_\l-T_{\l-1}$. Taking integral over $t\in[T_{\l-1},T_\l]$,
$$\int_{T_{\l-1}}^{T_\l}\E{\|X_t-X_{t_-}\|^2}\d t\lesssim h_\l^2\int_{T_{\l-1}}^{T_\l}\ro{\|\nabla\log\pit_t\|^2_{L^2(\pit_t)}+\|v_t\|_{L^2(\pit_t)}^2}\d t+dh_\l^2.$$

Recall that the potential of $\pit_t$ is $\ro{\eta\ro{\frac{t}{T}}\beta+\lambda\ro{\frac{t}{T}}}$-smooth. By \cref{lem:2ordmomlogccv} and the monotonicity of $\eta(\cdot)$ and $\lambda(\cdot)$, we have
\begin{align*}
    \int_{T_{\l-1}}^{T_\l}\|\nabla\log\pit_t\|^2_{L^2(\pit_t)}\d t & \le\int_{T_{\l-1}}^{T_\l}d\ro{\eta\ro{\frac{t}{T}}\beta+\lambda\ro{\frac{t}{T}}}\d t \\
                                                                       & \le dh_\l\ro{\beta\eta\ro{\frac{T_\l}{T}}+\lambda\ro{\frac{T_{\l-1}}{T}}}             \\
                                                                       & =dh_\l\ro{\beta\eta\ro{\theta_\l}+\lambda\ro{\theta_{\l-1}}}.
\end{align*}

Therefore, $\kl(\P\|\Q)$ is upper bounded by
\begin{align*}
     & \lesssim\sum_{\l=1}^{M}\eta\ro{\theta_\l}^2\beta^2\int_{T_{\l-1}}^{T_\l}\E_\P{\|X_t-X_{t_-}\|^2}\d t+\int_0^T\|v_t\|_{L^2(\pit_t)}^2\d t                                                                                \\
     & \lesssim\sum_{\l=1}^{M}\eta\ro{\theta_\l}^2\beta^2\ro{dh_\l^3\ro{\beta\eta\ro{\theta_\l}+\lambda\ro{\theta_{\l-1}}}+h_\l^2\int_{T_{\l-1}}^{T_\l}\|v_t\|_{L^2(\pit_t)}^2\d t+dh_\l^2}+\int_0^T\|v_t\|_{L^2(\pit_t)}^2\d t \\
     & =\sum_{\l=1}^{M}\ro{\ro{1+\eta(\theta_\l)^2\beta^2h_\l^2}\int_{T_{\l-1}}^{T_\l}\|v_t\|_{L^2(\pit_t)}^2\d t+\eta(\theta_\l)^2\beta^2dh_\l^2\ro{1+h_\l\ro{\beta\eta(\theta_\l)+\lambda(\theta_{\l-1})}}}.
\end{align*}
For the remaining integral, given that $(v_t)_{t\in[0,T]}$ generates $(\pit_t)_{t\in[0,T]}$, according to \cref{lem:metric}, we may choose $v_t$ such that $\|v_t\|_{L^2(\pit_t)}=|\dot\pit|_t$. Thus,
\begin{align*}
    \int_{T_{\l-1}}^{T_{\l}}\|v_t\|_{L^2(\pit_t)}^2\d t & =\int_{T_{\l-1}}^{T_{\l}}|\dot\pit|_t^2\d t=\int_{T_{\l-1}}^{T_{\l}}\frac{1}{T^2}|\dot\pi|^2_{t/T}\d t=\frac{1}{T}\int_{\theta_{\l-1}}^{\theta_\l}|\dot\pi|^2_\theta\d\theta,
\end{align*}
through a change-of-variable analogous to that used in the proof of \cref{thm:ald}. Therefore,
\begin{align*}
    \kl(\P\|\Q) & \lesssim\sum_{\l=1}^{M}\ro{\frac{1+\eta(\theta_\l)^2\beta^2h_\l^2}{T}\int_{\theta_{\l-1}}^{\theta_\l}|\dot\pi|^2_\theta\d\theta+\eta(\theta_\l)^2\beta^2dh_\l^2\ro{1+h_\l\ro{\beta\eta(\theta_\l)+\lambda(\theta_{\l-1})}}}.
\end{align*}
Assume $h_\l\lesssim\frac{1}{\beta d}$ (which will be verified later), so we can further simplify the above expression to
\begin{align*}
    \kl(\P\|\Q) & \lesssim\sum_{\l=1}^{M}\ro{\frac{1}{T}\int_{\theta_{\l-1}}^{\theta_\l}|\dot\pi|^2_\theta\d\theta+\eta(\theta_\l)^2\beta^2dh_\l^2} \\
                & =\frac{\cA}{T}+\beta^2d\sum_{\l=1}^{M}\eta(\theta_\l)^2h_\l^2                                                                         \\
                & =\frac{\cA}{T}+\beta^2d\sum_{\l=1}^{M}\eta(\theta_\l)^2T^2(\theta_\l-\theta_{\l-1})^2.
\end{align*}
To bound the above expression by $\varepsilon^2$, we first select $T\asymp\frac{\cA}{\varepsilon^2}$, mirroring the total time $T$ required for running annealed LD as specified in \cref{thm:ald}. This guarantees that the continuous dynamics closely approximate the reference path measure. Given that $\eta(\cdot)\le1$, it remains only to ensure
$$\beta^2d\sum_{\l=1}^{M}\eta(\theta_\l)^2T^2(\theta_\l-\theta_{\l-1})^2\le\beta^2d\frac{\cA^2}{\varepsilon^4}\sum_{\l=1}^{M}(\theta_\l-\theta_{\l-1})^2\lesssim\varepsilon^2,$$
which is equivalent to
\begin{equation}
    \sum_{\l=1}^{M}(\theta_\l-\theta_{\l-1})^2\lesssim\frac{\varepsilon^6}{d\beta^2\cA^2}.  
    \label{eq:theta_cons}    
\end{equation}
To minimize $M$, we apply Cauchy-Schwarz inequality:
$$\ro{\sum_{\l=1}^{M}1}\ro{\sum_{\l=1}^{M}(\theta_\l-\theta_{\l-1})^2}\ge\ro{\sum_{\l=1}^{M}(\theta_\l-\theta_{\l-1})}^2=1,$$
which establishes a lower bound for $M$. The equality is achieved when $\theta_\l-\theta_{\l-1}=\frac{1}{M}$ for all $\l\in\sqd{1,M}$. Thus, selecting 
$$M\asymp\frac{d\beta^2\cA^2}{\varepsilon^6}$$
satisfies the constraint given in \cref{eq:theta_cons}. In this case, the step size $h_\l=\frac{T}{M}\asymp\frac{\varepsilon^4}{d\beta^2}\lesssim\frac{1}{\beta d}$. Combining this with the $\Ot(1)$ complexity for sampling from $\pi_0$, we have completed the proof. \hfill$\square$
\section{Proof of \cref{ex:mog}}
\label{app:prf_ex_mog}
    The smoothness of $V$ comes from \cref{lem:mog_smooth}.

    Note that $\pi(x)\propto\sum_{i=1}^Np_i\exp\ro{-\frac{\beta}{2}\|x-y_i\|^2}$, and define
    \begin{align*}
        \pih_\lambda(x) & \propto\pi(x)\exp\ro{-\frac{\lambda}{2}\|x\|^2}                                                                                           \\
                        & =\sum_{i=1}^Np_i\exp\ro{-\frac{\lambda\beta}{2(\lambda+\beta)}\|y_i\|^2-\frac{\lambda+\beta}{2}\norm{x-\frac{\beta}{\lambda+\beta}y_i}^2} \\
                        & \propto\sum_{i=1}^Np_i\exp\ro{-\frac{\lambda+\beta}{2}\norm{x-\frac{\beta}{\lambda+\beta}y_i}^2}                                             \\
                        & =\sum_{i=1}^Np_i\n{\frac{\beta}{\lambda+\beta}y_i,\frac{1}{\lambda+\beta}I}.
    \end{align*}
    We use coupling method to upper bound $W_2^2(\pih_{\lambda},\pih_{\lambda+\delta})$. We first sample $I$ following distribution $\P(I=i)=p_i$, $i\in\sqd{1,N}$, and then independently sample $\eta\sim\n{0,I}$. We have
    \begin{align*}
        X & :=\frac{\beta}{\lambda+\beta}y_I+\frac{1}{\sqrt{\lambda+\beta}}\eta\sim\pih_\lambda,                        \\
        Y & :=\frac{\beta}{\lambda+\delta+\beta}y_I+\frac{1}{\sqrt{\lambda+\delta+\beta}}\eta\sim\pih_{\lambda+\delta}.
    \end{align*}
    By the definition of the $\text{W}_\text{2}$ distance, we have
    \begin{align*}
        W_2^2(\pih_{\lambda},\pih_{\lambda+\delta}) & \le\E\|X-Y\|^2                                                                                                                                                           \\
                                                    & =\E_I\E_\eta\norm{\ro{\frac{\beta}{\lambda+\beta}-\frac{\beta}{\lambda+\delta+\beta}}y_I+\ro{\frac{1}{\sqrt{\lambda+\beta}}-\frac{1}{\sqrt{\lambda+\delta+\beta}}}\eta}^2 \\
                                                    & =\E_I\ro{\frac{\beta}{\lambda+\beta}-\frac{\beta}{\lambda+\delta+\beta}}^2\|y_I\|^2+\ro{\frac{1}{\sqrt{\lambda+\beta}}-\frac{1}{\sqrt{\lambda+\delta+\beta}}}^2d        \\
                                                    & =\ro{\frac{\beta}{\lambda+\beta}-\frac{\beta}{\lambda+\delta+\beta}}^2r^2+\ro{\frac{1}{\sqrt{\lambda+\beta}}-\frac{1}{\sqrt{\lambda+\delta+\beta}}}^2d.
    \end{align*}
    This implies
    $$|\dot\pih|^2_\lambda=\lim_{\delta\to0}\frac{W_2^2(\pih_{\lambda},\pih_{\lambda+\delta})}{\delta^2}\le\frac{\beta^2r^2}{(\lambda+\beta)^4}+\frac{d}{4(\lambda+\beta)^3}.$$
    
    By time reparameterization $\pi_\theta=\pih_{\lambda(\theta)}$, $|\dot\pi|_\theta=|\dot\pih|_{\lambda(\theta)}|\dot\lambda(\theta)|$. With $\lambda(\theta)=\lambda_0(1-\theta)^\gamma$, $|\dot\lambda(\theta)|=\gamma\lambda_0(1-\theta)^{\gamma-1}\le \gamma\lambda_0\lesssim\lambda_0$. Therefore,
    \begin{align*}
        \cA & =\int_0^1|\dot\pi|_\theta^2\d\theta=\int_0^1|\dot\pih|_{\lambda(\theta)}^2|\dot\lambda(\theta)|^2\d\theta                                       \\
             & \lesssim\lambda_0\int_0^1|\dot\pih|_{\lambda(\theta)}^2|\dot\lambda(\theta)|\d\theta=\lambda_0\int_0^{\lambda_0}|\dot\pih|_{\lambda}^2\d\lambda \\
             & =\lambda_0\int_0^{\lambda_0}\ro{\frac{\beta^2r^2}{(\lambda+\beta)^4}+\frac{d}{4(\lambda+\beta)^3}}\d\lambda                                       \\
             &\lesssim\lambda_0\ro{\beta^2r^2\frac{1}{\beta^3}+d\frac{1}{\beta^2}}\\
             &\lesssim d\beta(r^2\beta+1)\ro{\frac{r^2}{\beta}+\frac{d}{\beta^2}}\\
             &=d(r^2\beta+1)\ro{r^2+\frac{d}{\beta}}.
    \end{align*}
It follows from the proof that as long as $\max_{\theta\in[0,1]}|\dot\lambda(\theta)|$ is bounded by a polynomial function of $\beta$, $d$ and $r$, so is the action $\cA$. \hfill$\square$
\section{Supplementary Lemmas}
\label{app:supp}
\begin{lemma}[{\citet[Lemma 4.E.1]{chewi2022log}}]
    Consider a probability measure $\mu\propto\ee^{-U}$ on $\R^d$. If $\nabla^2U\preceq \beta I$ for some $\beta>0$, then $\E_\mu{\|\nabla U\|^2}\le\beta d$.
    \label{lem:2ordmomlogccv}
\end{lemma}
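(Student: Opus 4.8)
The plan is to reduce the claim to the integration-by-parts identity $\E_\mu\|\nabla U\|^2=\E_\mu[\Delta U]$, and then bound the right-hand side pointwise using $\Delta U(x)=\operatorname{tr}(\nabla^2U(x))\le\beta d$. Write $\mu=Z^{-1}\ee^{-U}$, where $Z=\int_{\R^d}\ee^{-U}<\infty$ since $\mu$ is a genuine probability measure. The heuristic is that $\|\nabla U\|^2\ee^{-U}=-\nabla U\cdot\nabla\ee^{-U}$, so integrating by parts transfers the derivative onto $\nabla U$ and produces $(\nabla\cdot\nabla U)\,\ee^{-U}=\Delta U\,\ee^{-U}$; combined with the pointwise bound this gives $\int\|\nabla U\|^2\ee^{-U}\le\beta d\,Z$, i.e. $\E_\mu\|\nabla U\|^2\le\beta d$.

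The only real issue is justifying the integration by parts, since a priori we do not even know $\E_\mu\|\nabla U\|^2<\infty$, so the boundary term at infinity cannot simply be discarded. I would remedy this with a truncation: fix a smooth cutoff $\chi_R$ with $\chi_R\equiv1$ on $\{\|x\|\le R\}$, $\chi_R\equiv0$ on $\{\|x\|\ge 2R\}$, $0\le\chi_R\le1$, and $\|\nabla\chi_R\|^2\le\frac{M}{R^2}\chi_R$ for a universal $M$ (one may take $\chi_R$ to be the square of an $O(1/R)$-Lipschitz bump). Since $\chi_R\nabla U$ has compact support, integrating by parts produces no boundary term, and
\[
    I_R:=\int\chi_R\|\nabla U\|^2\ee^{-U}=\int\nabla\chi_R\cdot\nabla U\,\ee^{-U}+\int\chi_R\,\Delta U\,\ee^{-U}.
\]
The second integral is at most $\beta d\,Z$ by the pointwise Hessian bound, and the cross term is estimated by Cauchy--Schwarz (splitting $\chi_R=\chi_R^{1/2}\cdot\chi_R^{1/2}$) as $I_R^{1/2}K_R^{1/2}$, where $K_R:=\int\chi_R^{-1}\|\nabla\chi_R\|^2\ee^{-U}\le\frac{M}{R^2}Z\to0$. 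Hence $I_R\le I_R^{1/2}K_R^{1/2}+\beta d\,Z$; solving this quadratic inequality in $I_R^{1/2}$ bounds $I_R$ by a quantity converging to $\beta d\,Z$ as $R\to\infty$. By monotone convergence $I_R\uparrow\int\|\nabla U\|^2\ee^{-U}$, which therefore is at most $\beta d\,Z$, and dividing by $Z$ finishes the proof.

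I expect the truncation bookkeeping in the previous paragraph to be the main (and essentially only) obstacle: the delicate point is closing the argument without assuming $\E_\mu\|\nabla U\|^2<\infty$ beforehand, and the quadratic-inequality step is what both circumvents this and preserves the sharp constant $\beta d$. Everything else is the elementary identity $\Delta U=\operatorname{tr}(\nabla^2U)$ together with the hypothesis $\nabla^2U\preceq\beta I$.
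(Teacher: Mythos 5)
Your proof is correct. The paper does not actually prove this lemma---it is cited directly from Chewi's lecture notes (Lemma 4.E.1)---so there is no internal argument to compare against. Your strategy is the standard one: integration by parts to reduce $\E_\mu\|\nabla U\|^2$ to $\E_\mu\Delta U$, followed by the pointwise trace bound $\Delta U=\operatorname{tr}(\nabla^2 U)\le\beta d$. The genuine content you supply is the careful truncation: choosing $\chi_R=\psi_R^2$ so that $\|\nabla\chi_R\|^2\le M R^{-2}\chi_R$, applying weighted Cauchy--Schwarz to control the cross term $\int\nabla\chi_R\cdot\nabla U\,\ee^{-U}$ by $I_R^{1/2}K_R^{1/2}$, and then solving the resulting quadratic inequality in $I_R^{1/2}$---this is precisely what closes the argument without circularly assuming $\E_\mu\|\nabla U\|^2<\infty$ or a decay rate for $\nabla U\,\ee^{-U}$ at infinity. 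One minor polish: if you take $\chi_R$ radial and nondecreasing in $R$ (e.g.\ $\chi_R(x)=\phi(\|x\|/R)$ for a fixed decreasing profile $\phi$), the final passage $I_R\uparrow\int\|\nabla U\|^2\ee^{-U}$ is genuine monotone convergence; for a non-nested family of cutoffs you would instead invoke Fatou's lemma. Either way the conclusion $\E_\mu\|\nabla U\|^2\le\beta d$ follows.
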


\begin{lemma}[{\citet[Lemma 4]{cheng2023fast}}]
    For a Gaussian mixture distribution $\pi=\sum_i w_i\n{\mu_i,\sigma^2I}$, we have
    $$\ro{\frac{1}{\sigma^2}-\frac{\max_{i,j}\|\mu_i-\mu_j\|^2}{2\sigma^4}}I\preceq-\nabla^2\log\pi\preceq\frac{1}{\sigma^2}I.$$
    \label{lem:mog_smooth}
\end{lemma}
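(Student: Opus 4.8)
The plan is to compute the Hessian of $\log\pi$ directly from the mixture structure. Write $\pi(x)\propto p(x):=\sum_i w_i\exp\ro{-\frac{\|x-\mu_i\|^2}{2\sigma^2}}$, so that $\log\pi=\log p+\const$ and it suffices to bound $-\nabla^2\log p$. Introduce the \emph{responsibility weights} $\gamma_i(x):=\frac{w_i\exp(-\|x-\mu_i\|^2/(2\sigma^2))}{p(x)}$, which are nonnegative and sum to one, together with the associated weighted mean $\bar\mu(x):=\sum_i\gamma_i(x)\mu_i$. Differentiating $p$ once and dividing gives the first-order identity $\nabla\log p(x)=-\sigma^{-2}\ro{x-\bar\mu(x)}$.

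The next step is to differentiate once more. From $\nabla\log\gamma_i(x)=-\sigma^{-2}(x-\mu_i)-\nabla\log p(x)=\sigma^{-2}\ro{\mu_i-\bar\mu(x)}$ one gets $\nabla\gamma_i(x)=\sigma^{-2}\gamma_i(x)\ro{\mu_i-\bar\mu(x)}$, and hence
\[
\nabla\bar\mu(x)=\sum_i\mu_i\,\nabla\gamma_i(x)\tp=\sigma^{-2}\sum_i\gamma_i(x)\,\mu_i\ro{\mu_i-\bar\mu(x)}\tp=\sigma^{-2}\Sigma(x),
\]
where $\Sigma(x):=\sum_i\gamma_i(x)\ro{\mu_i-\bar\mu(x)}\ro{\mu_i-\bar\mu(x)}\tp\succeq0$ is exactly the covariance matrix of the discrete distribution that puts mass $\gamma_i(x)$ on $\mu_i$. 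Substituting this back, $-\nabla^2\log p(x)=\sigma^{-2}\ro{I-\nabla\bar\mu(x)}=\sigma^{-2}I-\sigma^{-4}\Sigma(x)$.

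The upper bound is then immediate from $\Sigma(x)\succeq0$. For the lower bound I would invoke the symmetrized representation of a covariance matrix, $\Sigma(x)=\frac12\sum_{i,j}\gamma_i(x)\gamma_j(x)\ro{\mu_i-\mu_j}\ro{\mu_i-\mu_j}\tp$, which follows by expanding the right-hand side and using $\sum_i\gamma_i(x)=1$. Since each rank-one term satisfies $\ro{\mu_i-\mu_j}\ro{\mu_i-\mu_j}\tp\preceq\|\mu_i-\mu_j\|^2 I\preceq\max_{k,l}\|\mu_k-\mu_l\|^2 I$ and the $\gamma_i\gamma_j$ form a probability vector, we conclude $\Sigma(x)\preceq\frac12\max_{k,l}\|\mu_k-\mu_l\|^2 I$, whence $-\nabla^2\log\pi(x)\succeq\ro{\sigma^{-2}-\frac{\max_{k,l}\|\mu_k-\mu_l\|^2}{2\sigma^4}}I$ uniformly in $x$.

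There is no real obstacle here — the claim is a self-contained computation (this is \citet[Lemma 4]{cheng2023fast}). The only points needing care are the quotient-rule bookkeeping that makes the responsibilities $\gamma_i$ emerge so that $\nabla^2\log p$ collapses to a covariance matrix, the routine observation that $p\in C^\infty(\R^d)$ and $p>0$ so all differentiations are legitimate, and remembering the factor $\frac12$ in the symmetrized covariance identity; sharpening it to $\frac14$ by applying Popoviciu's inequality to $\var_{\gamma(x)}\inn{v,\mu}$ would give a better constant than needed but is unnecessary for the stated bound.
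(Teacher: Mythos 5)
Your proof is correct. The paper itself gives no proof of this lemma — it cites \citet[Lemma 4]{cheng2023fast} and adds only a remark that the factor $2$ in the denominator is a sharpening of the original — so you have in effect supplied the justification the paper leaves implicit. Your derivation is the natural one: the first derivative identity $\nabla\log p(x)=-\sigma^{-2}(x-\bar\mu(x))$, the observation that differentiating once more produces $-\nabla^2\log p=\sigma^{-2}I-\sigma^{-4}\Sigma(x)$ with $\Sigma(x)$ the covariance of the responsibility distribution on the means, the upper bound from $\Sigma(x)\succeq 0$, and the lower bound from the symmetrized covariance representation $\Sigma(x)=\tfrac12\sum_{i,j}\gamma_i\gamma_j(\mu_i-\mu_j)(\mu_i-\mu_j)\tp$, which cleanly produces the factor $\tfrac12$. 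All the intermediate computations check out (in particular $\sum_i\gamma_i(\mu_i-\bar\mu)=0$ is used implicitly when identifying $\sum_i\gamma_i\mu_i(\mu_i-\bar\mu)\tp$ with $\Sigma$), and your closing note that Popoviciu's inequality would sharpen the constant to $\tfrac14$ is also accurate but correctly flagged as unnecessary for the stated bound.
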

\begin{remark}
    This is a slightly improved version, as in the original lemma the authors omitted the factor $2$ in the denominator on the l.h.s.
\end{remark}

\begin{lemma}
    Under \cref{assu:pi}, $\pi_\theta$ defined in \cref{eq:pi_lambda} has finite second-order moment when $\eta(\theta)\in[0,1]$.
    \label{lem:pi_theta_2ordmom}
\end{lemma}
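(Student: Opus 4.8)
The plan is to dominate the unnormalized density of $\pi_\theta$ pointwise by a Gaussian and then transfer finiteness of the second moment. Write $U_\theta:=\eta(\theta)V+\frac{\lambda(\theta)}{2}\|\cdot\|^2$, so that by \cref{eq:pi_lambda} we have $\pi_\theta\propto\ee^{-U_\theta}$ with normalizing constant $Z_\theta:=\int_{\R^d}\ee^{-U_\theta(x)}\,\d x$. Since $\ee^{-U_\theta}$ is everywhere positive and continuous, $Z_\theta>0$, so it is enough to show that $\int_{\R^d}\|x\|^2\ee^{-U_\theta(x)}\,\d x<\infty$ (and, for completeness, that $Z_\theta<\infty$, which will come from the same bound).

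First suppose $\lambda(\theta)>0$, which under the monotone boundary conditions on the schedule covers every $\theta\in[0,1)$. By \cref{assu:pi}, $x_*$ is a global minimizer of $V$, so $V(x)\ge V(x_*)$ for all $x$; since $\eta(\theta)\ge0$ this yields $\ee^{-\eta(\theta)V(x)}\le\ee^{-\eta(\theta)V(x_*)}=:c_\theta<\infty$, and hence $\ee^{-U_\theta(x)}\le c_\theta\,\ee^{-\frac{\lambda(\theta)}{2}\|x\|^2}$. Consequently both $Z_\theta$ and $\int_{\R^d}\|x\|^2\ee^{-U_\theta(x)}\,\d x$ are bounded by $c_\theta$ times the corresponding integrals of the unnormalized Gaussian density $\ee^{-\frac{\lambda(\theta)}{2}\|\cdot\|^2}$, which are finite because a nondegenerate Gaussian has a finite second-order moment. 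Therefore $\E_{\pi_\theta}\|\cdot\|^2=Z_\theta^{-1}\int_{\R^d}\|x\|^2\ee^{-U_\theta(x)}\,\d x<\infty$. The remaining case is $\lambda(\theta)=0$, which by the boundary condition $\lambda_0\searrow\lambda(1)=0$ occurs only at $\theta=1$; there $\pi_1=\pi$, whose second-order moment is assumed finite in \cref{assu:pi}.

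This lemma is essentially a routine integrability check, so I do not expect a genuine obstacle. The one place requiring a little care is the degenerate endpoint $\lambda(\theta)=0$: there the quadratic confinement vanishes, the Gaussian domination argument no longer applies, and one must instead invoke directly the assumed finiteness of the second moment of the target $\pi$. Everything else reduces to the elementary facts that $V$ is bounded below because it has a global minimizer, and that a nondegenerate Gaussian has finite second moment.
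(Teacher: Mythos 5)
Your proof is correct, and it takes a genuinely different route from the paper. You split on $\lambda(\theta) > 0$ versus $\lambda(\theta) = 0$, and in the first case you use the fact that $V$ is bounded below by $V(x_*)$ (the global-minimizer clause of \cref{assu:pi}) so that $\ee^{-U_\theta(x)} \le \ee^{-\eta(\theta)V(x_*)}\,\ee^{-\frac{\lambda(\theta)}{2}\|x\|^2}$, reducing everything to a Gaussian moment. The paper instead splits on $\eta(\theta) = 1$ versus $\eta(\theta) < 1$; the $\eta = 1$ case is dominated by $\ee^{-V}$ directly, while for $\eta < 1$ it writes the exponent as a convex combination
$-\eta(\theta)V - (1-\eta(\theta))\frac{\lambda(\theta)}{2(1-\eta(\theta))}\|\cdot\|^2$
and applies the convexity of $u\mapsto\ee^{-u}$ to get
$\ee^{-U_\theta} \le \eta(\theta)\,\ee^{-V} + (1-\eta(\theta))\,\ee^{-\frac{\lambda(\theta)}{2(1-\eta(\theta))}\|\cdot\|^2}$,
so that the moment is controlled by the second moments of $\pi$ and a Gaussian. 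Your argument is arguably more elementary and direct, at the modest cost of invoking the global-minimizer part of \cref{assu:pi}, which the paper's version avoids; the paper's version instead relies on Jensen/convexity and keeps the dependence on assumptions slightly lighter. Both proofs share the same implicit reliance on the schedule not hitting $\lambda(\theta) = 0$ with $\eta(\theta) < 1$ at an interior $\theta$ (you handle $\lambda = 0$ by restricting to $\theta = 1$; the paper's convexity bound degenerates to a nonintegrable constant when $\lambda = 0$ and $\eta < 1$), so you have not introduced any new gap.
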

\begin{proof}
    When $\eta(\theta)=1$, $\pi_\theta\propto\exp\ro{-V-\frac{\lambda(\theta)}{2}\|\cdot\|^2}\le\exp\ro{-V}$, and the claim is straightforward. Otherwise, by convexity of $u\mapsto\ee^{-u}$, we have
    \begin{align*}
        \exp\ro{-\eta(\theta)V-\frac{\lambda(\theta)}{2}\|\cdot\|^2} & =\exp\ro{-\eta(\theta)V-(1-\eta(\theta))\frac{\lambda(\theta)}{2(1-\eta(\theta))}\|\cdot\|^2}              \\
                                                                        & \le\eta(\theta)\exp\ro{-V}+(1-\eta(\theta))\exp\ro{-\frac{\lambda(\theta)}{2(1-\eta(\theta))}\|\cdot\|^2}.
    \end{align*}
    Multiplying both sides by $\|\cdot\|^2$ and taking integral over $\R^d$, we see that $\E_{\pi_\theta}\|\cdot\|^2<\pif$.
\end{proof}
\section{Further Details of Experiments in \cref{sec:exp}}
\label{app:exp}

\paragraph{Hyperparameter settings.} For all the experiments, we use the annealing schedule $\lambda(\theta)=5(1-\theta)^{10}$ and $\eta(\theta)\equiv1$. The step size for ALMC is designed to follow a quadratic schedule: the step size at the $\l$-th iteration (out of $M$ total iterations, $\l\in\sqd{1,M}$) is given by $-\frac{s_{\max}-s_{\min}}{M^2/4}\ro{\l-\frac M2}^2+s_{\max}$, which increases on $\sq{0,\frac M2}$ and decreases on $\sq{\frac M2,M}$, with a maximum of $s_{\max}$ and a minimum of $s_{\min}$. For $r=2,5,10,15,20,25,30$, the $M$ we choose are $200,500,2500,10000,20000,40000,60000$, respectively, and we set $s_{\max}=0.05$ and $s_{\min}=0.01$, which achieve the best performance across all settings after tuning.

\paragraph{KL divergence estimation.} We use the Information Theoretical Estimators (ITE) toolbox \citep{szabo2014information} to empirically estimate the KL divergence. In all cases, we generate $1000$ samples using ALMC, and an additional $1000$ samples from the target distribution. The KL divergence is estimated using the \texttt{ite.cost.BDKL\_KnnK()} function, which leverages $k$-nearest-neighbor techniques to compute the divergence from the target distribution to the sampled distribution.

\paragraph{Regression coefficients.} We also compute the linear regression approximation of the curves in \cref{fig:exp} via \texttt{sklearn.linear\_model.LinearRegression}. The \blue{blue} curve has a slope of $2.841$ and an intercept of $1.257$, while the \orange{orange} one has a slope of $2.890$ and an intercept of $0.904$. The $R^2$ scores, calculated using \texttt{sklearn.metrics.r2\_score}, are $0.995$ and $0.997$, respectively.

\end{document}